\newtheorem{thm}{Theorem}
\title{EoH-S: Evolution of Heuristic Set using LLMs for Automated Heuristic Design}
\author {
    Fei Liu\textsuperscript{\rm 1},
    Yilu Liu\textsuperscript{\rm 1},
    Qingfu Zhang\textsuperscript{\rm 1},
    Xialiang Tong\textsuperscript{\rm 2},
    Mingxuan Yuan\textsuperscript{\rm 2}
}
\begin{document}

\maketitle

\begin{abstract}
Automated Heuristic Design (AHD) using Large Language Models (LLMs) has achieved notable success in recent years. Despite the effectiveness of existing approaches, they only design a single heuristic to serve all problem instances, often inducing poor generalization across different distributions or settings. To address this issue, we propose Automated Heuristic Set Design (AHSD), a new formulation for LLM-driven AHD. The aim of AHSD is to automatically generate a small-sized complementary heuristic set to serve diverse problem instances, such that each problem instance could be optimized by at least one heuristic in this set. We show that the objective function of AHSD is monotone and supermodular. Then, we propose \underline{E}volution of \underline{H}euristic \underline{S}et (EoH-S) to apply the AHSD formulation for LLM-driven AHD. With two novel mechanisms of complementary population management and complementary-aware memetic search, EoH-S could effectively generate a set of high-quality and complementary heuristics. Comprehensive experimental results on three AHD tasks with diverse instances spanning various sizes and distributions demonstrate that EoH-S consistently outperforms existing state-of-the-art AHD methods and achieves up to 60\% performance improvements. 
\end{abstract}


\section{Introduction}

Automated Heuristic Design (AHD) using Large Language Models (LLMs) has been a success in recent years~\cite{liu2024systematic}. With the code generation and language comprehension capability of LLMs, the automation and flexibility of algorithm design have been significantly enhanced. Until now, LLM-driven AHD has found successful applications across diverse domains, including optimization~\cite{liu2024evolution,ye2024reevo,van2024llamea,yao2024evolve,ye2025large,dat2025hsevo,li2025llm}, mathematics~\cite{romera2024mathematical,novikov2025alphaevolve}, and machine learning~\cite{mo2025autosgnn}. 

A prevalent paradigm in LLM-driven AHD is to integrate LLMs as heuristic designers within certain iterative search frameworks~\cite{zhang2024understanding}, including evolutionary search~\cite{liu2024evolution,ye2024reevo,van2024llamea}, neighborhood search~\cite{xie2025llm}, and Monte Carlo tree search (MCTS)~\cite{zheng2025monte}. Notable examples include EoH~\cite{liu2024evolution}, which evolves both thoughts and codes for effective automated heuristic design. FunSearch~\cite{romera2024mathematical} utilizes a multi-island evolutionary framework with a single prompt strategy for guiding LLMs in function search. Moreover, ReEvo~\cite{ye2024reevo} integrates short- and long-term reflection strategies to enhance the heuristic design process. 

Despite these advancements, existing methods mainly focus on identifying a single heuristic with the best average performance across a set of training instances. This approach may suffer from generalization limitations due to the following two reasons: 
1) finding a single heuristic with the best performance across all diverse instances is inherently difficult~\cite{sim2025beyond}, and 2) even when a heuristic excels on most training instances, it often fails to generalize to unseen test instances with different distributions or scales~\cite{shi2025generalizable}.

A common approach to tackle the generalization limitations of heuristic design is to use an algorithm portfolio~\cite{gomes2001algorithm,tang2021few} (i.e., a combination of different algorithms). It is natural for us to leverage this approach to deal with the challenges faced by LLM-driven AHD: using different heuristics rather than a single heuristic. However, given the typically enormous number of problem instances, it is impractical, if not impossible, to find one heuristic for each instance. 
With these concerns, this paper makes the following contributions:

\begin{itemize}
    \item We introduce Automated Heuristic Set Design (AHSD), a new formulation for LLM-driven AHD. AHSD aims to automatically generate a small-sized complementary heuristic set to serve diverse problem instances, such that each problem instance could be optimized by at least one heuristic in this set. We show that the objective function of AHSD is monotone and supermodular.
    \item We propose Evolution of Heuristic Set (EoH-S) to apply the AHSD formulation for LLM-driven AHD. EoH incorporates two key components of complementary population management and diversity-aware memetic search to effectively search a set of high-quality and complementary heuristics.
    \item We conduct extensive experiments on three AHD tasks with instances of different distributions and sizes, as well as on many benchmark sets. Results show that EoH-S excels in all scenarios and achieves up to 60\% performance improvements over state-of-the-art AHD methods. 
\end{itemize}

\section{Automated Heuristic Set Design (AHSD)}

\subsection{Problem Formulation}

Suppose we are given a target heuristic design task $\mathcal{T}$ (e.g., Traveling Salesman Problem (TSP)) with $m$ diverse problem instances $I=\{i_1,\dots,i_m\}$, the aim of AHSD is to automatically generate a heuristic set $H=\{h_1,\dots,h_k\}\subseteq D$, where $1<k\ll m$ and $D$ is the search space, such that each problem instance could be optimized by at least one heuristic in this set, which could be mathematically expressed as
\begin{equation}
\label{ahsd}
    \min_{H\subseteq D}\left(f^*_{H,i_1}, f^*_{H,i_2},\dots,f^*_{H,i_m}\right),
\end{equation}
where $f^*_{H,i}=\min_{h\in H} f_i(h)$, and $f_i(h)$ denotes the performance score of heuristic $h$ on instance $i$ (lower is better). Since (\ref{ahsd}) evaluates $H$ from multiple criteria, it is difficult to directly optimize it. To address this issue, we use linear scalarization~\cite{triantaphyllou2000multi,zhang2021survey} to transform (\ref{ahsd}) into the following optimization objective:

\begin{equation}
\label{obj}
    \mathcal{F}(H) = \frac{1}{m} \sum_{i \in I} f^*_{H,i}. 
\end{equation}
For clarity, we call $\mathcal{F}(H)$ as \textbf{Complementary Performance Index (CPI)}. Taking CPI as the objective function, the AHSD problem could be formally stated as below.

\begin{tcolorbox}
\textit{AHSD Problem}:
Given a target heuristic design task $\mathcal{T}$ with $m$ task instances $I=\{i_1,\dots,i_m\}$, the aim of AHSD is to automatically generate a heuristic set $H=\{h_1,\dots,h_k\}\subseteq D$ with $1<k\ll m$ such that $\mathcal{F}(H)$ is minimized.
\end{tcolorbox}
When $k = 1$, the AHSD problem is equal to finding a single optimal heuristic with the best average performance across all instances, which is aligned with the objective of most LLM-driven AHD methods.
When $k = m$, the AHSD problem is equal to finding the best heuristic for each instance independently. Thus, we consider the non-trivial case of $1<k\ll m$. 
Next, we demonstrate some theoretical properties of $\mathcal{F}(H)$.

\subsection{Theoretical Analysis}

\begin{thm}[Optimization Complexity]
\label{th1}
The optimization of $\mathcal{F}(H)$ is NP-hard.
\end{thm}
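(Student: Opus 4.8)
The plan is to prove NP-hardness of the cardinality-constrained minimization $\min\{\mathcal{F}(H): H\subseteq D,\ |H|=k\}$ with $k$ part of the input, via a polynomial-time reduction from a classical NP-complete problem. Note first that the cardinality budget is essential: since $\mathcal{F}$ is monotone non-increasing in $H$, taking $H=D$ would be optimal without it, so — exactly as for maximum coverage — the hardness must come from the budget $k$. I would therefore work with the decision version: given a rational threshold $\theta$, is there $H$ with $|H|\le k$ and $\mathcal{F}(H)\le\theta$?

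For the reduction I would use \textsc{Set Cover}: given a ground set $U=\{u_1,\dots,u_m\}$, a family $\mathcal{S}=\{S_1,\dots,S_n\}$ of subsets of $U$, and an integer $k$, build an AHSD instance with task instances $I=U$, with $n$ candidate heuristics $h_1,\dots,h_n$ (one per set), and with score table $f_{u_j}(h_l)=0$ if $u_j\in S_l$ and $f_{u_j}(h_l)=1$ otherwise. Then $f^*_{H,u_j}=\min_{h_l\in H}f_{u_j}(h_l)$ is $0$ exactly when $u_j$ belongs to some $S_l$ with $h_l\in H$ and is $1$ otherwise, so $\mathcal{F}(H)=\frac{1}{m}\sum_j f^*_{H,u_j}$ is the normalized count of elements left uncovered by $\{S_l : h_l\in H\}$. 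Hence $\min_{|H|=k}\mathcal{F}(H)=0$ iff the \textsc{Set Cover} instance has a cover of size $k$. The construction is clearly polynomial, so any algorithm minimizing $\mathcal{F}$ (or even deciding whether $\mathcal{F}(H)\le 0$ for some feasible $H$) would decide \textsc{Set Cover}; NP-hardness follows.

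Two points would round out the argument. First, the $0/1$ score table is legitimate for the AHSD setting: one can instantiate each $h_l$ as a concrete heuristic that solves precisely the instances in $S_l$ to optimality and the remaining ones strictly worse, so the hardness is intrinsic to the formulation rather than an artifact of pathological inputs — equivalently, Theorem~\ref{th1} may be read as a statement about the underlying combinatorial problem with an explicit score matrix, which is a (non-metric) $k$-median problem and thus NP-hard. Second, the reduction stays inside the intended regime $1<k\ll m$, because \textsc{Set Cover} remains NP-complete when restricted to $k\ge 2$ and $k=o(m)$ (pad $U$ with dummy elements covered by every set). I expect the substance of the proof to be this reduction, which is short; the only care needed is bookkeeping — fixing the precise problem variant, and the realizability and parameter-regime remarks above. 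A reduction from \textsc{Maximum Coverage} or \textsc{Dominating Set} would work equally well and would additionally suggest inapproximability, but \textsc{Set Cover} gives the most transparent route to plain NP-hardness.
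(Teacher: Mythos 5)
Your proposal is correct, but it takes a genuinely different route from the paper. The paper's proof also argues by specialization, but to a clustering problem: it sets $f_i(h)=\mathrm{dis}(h_i^*,h)$, the distance from $h$ to the optimal heuristic $h_i^*$ for instance $i$, so that minimizing $\mathcal{F}$ becomes choosing $k$ centers for the $m$ points $h_1^*,\dots,h_m^*$ under a sum-of-minimum-distances objective; it then identifies this with the Discrete Clustering Problem and invokes its known NP-hardness for $k>1$. You instead reduce from \textsc{Set Cover} using a $0/1$ score table, so that $\mathcal{F}(H)=0$ exactly when the chosen heuristics cover all instances, and you explicitly handle the two points the paper leaves implicit: that the hardness hinges on the cardinality budget $k$ (by monotonicity, $H=D$ would otherwise be optimal), and that the constructed score matrix is realizable by actual heuristics so the hardness is not an artifact of pathological inputs. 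What each approach buys: the paper's argument is shorter and stays closer to the ``heuristics as points, instances as cluster members'' intuition, but it outsources the hardness to a cited result and silently assumes distance-based scores are admissible instances of the formulation; your reduction is self-contained, works at the level of the decision problem with an explicit threshold, keeps the parameters in the intended regime $1<k\ll m$, and, as you note, a variant via \textsc{Maximum Coverage} would additionally suggest inapproximability, which the paper's DCP argument does not provide. Both establish the theorem as stated.
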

\begin{proof}
We consider an instance of $\mathcal{F}(H)$ by setting $f_i(h)$ as
\begin{equation}
f_i(h) = \text{dis}(h_i^*,h),
\end{equation} 
where $h_i^*\in D$ is the optimal heuristic for instance $i$ and $\text{dis}(h_i^*,h)$ denotes a certain distance between $h_i^*$ and $h$. Thus, the objective function of this instance is
\begin{equation}
\mathcal{F}'(H) = \frac{1}{m}\sum_{i\in I}{\min_{h\in H}\text{dis}(h_i^*,h)}.
\end{equation} 
We can find that the aim of this instance is equal to finding $k$ centers $\left\{h_1,\dots,h_k\right\}$ for $m$ data points $\left\{h_1^*,\dots,h_m^*\right\}$, such that each data point can be assigned to the optimal center, which is consistent with the aim of the Discrete Clustering Problem (DCP)~\cite{dcp}. Since DCP is NP-hard when $k>1$~\cite{dcp}, the optimization of $\mathcal{F}(H)$ is also NP-hard, which concludes the proof of Theorem~\ref{th1}.
\end{proof}

\begin{thm}[Monotonicity and Supermodularity]
\label{th2}
For any two heuristic sets $U \subseteq V \subseteq D$ with $1\leq |U|\leq |V|$, $\mathcal{F}(U) \geq \mathcal{F}(V)$ holds. Besides, for any heuristic $h' \in D$\textbackslash $V$, $\mathcal{F}(U) - \mathcal{F}(U \cup \left\{h' \right\}) \geq \mathcal{F}(V) - \mathcal{F}(V \cup \left\{h' \right\})$ holds.
\end{thm}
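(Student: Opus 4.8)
The plan is to decompose $\mathcal{F}$ into a nonnegative average of per‑instance terms and to verify both properties termwise. Concretely, write $\mathcal{F}(H)=\frac{1}{m}\sum_{i\in I} g_i(H)$ with $g_i(H)=f^*_{H,i}=\min_{h\in H}f_i(h)$. Since a nonnegative linear combination of monotone (resp.\ supermodular) set functions is again monotone (resp.\ supermodular), it suffices to prove that each $g_i$ is nonincreasing and supermodular on the nonempty subsets of $D$.

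For monotonicity, I would fix an instance $i$ and sets $U\subseteq V$ with $U\neq\emptyset$: minimizing $f_i(\cdot)$ over the larger set $V$ can only decrease the value, so $g_i(V)=\min_{h\in V}f_i(h)\le\min_{h\in U}f_i(h)=g_i(U)$. Averaging over $i\in I$ gives $\mathcal{F}(V)\le\mathcal{F}(U)$, which is the first claim.

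For supermodularity, fix an instance $i$, sets $U\subseteq V\subseteq D$, and $h'\in D\setminus V$, and abbreviate $a=g_i(U)$, $b=g_i(V)$, $c=f_i(h')$; monotonicity already gives $b\le a$. Adding $h'$ yields $g_i(U\cup\{h'\})=\min\{a,c\}$ and $g_i(V\cup\{h'\})=\min\{b,c\}$, so the per‑instance marginal decreases are $a-\min\{a,c\}=\max\{a-c,0\}$ and $b-\min\{b,c\}=\max\{b-c,0\}$ respectively. Because $t\mapsto\max\{t-c,0\}$ is nondecreasing and $b\le a$, we obtain $a-\min\{a,c\}\ge b-\min\{b,c\}$. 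Summing over $i\in I$ and dividing by $m$ turns this into $\mathcal{F}(U)-\mathcal{F}(U\cup\{h'\})\ge\mathcal{F}(V)-\mathcal{F}(V\cup\{h'\})$, as required.

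I expect no genuine obstacle here: the only thing to watch is the bookkeeping of inequality directions — since $\mathcal{F}$ is a minimization objective, ``monotone'' means $\mathcal{F}$ is nonincreasing in $H$, and ``supermodular'' is exactly the diminishing‑returns statement for the improvement $\mathcal{F}(\cdot)-\mathcal{F}(\cdot\cup\{h'\})$ — after which the whole argument collapses to the elementary fact that $t\mapsto(t-c)^{+}$ is nondecreasing. An essentially equivalent alternative would be to verify the pairwise inequality $g_i(A)+g_i(B)\le g_i(A\cup B)+g_i(A\cap B)$ directly, but the marginal‑decrease formulation above is cleaner because those decreases have the explicit closed form $\max\{t-c,0\}$.
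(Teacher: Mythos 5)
Your proposal is correct and follows essentially the same route as the paper: decompose $\mathcal{F}$ into the per-instance minima $f^*_{H,i}$, verify monotonicity and supermodularity termwise, and then average. The only difference is cosmetic — where the paper runs an explicit case analysis on the location of the minimizer (in $U$, in $V\setminus U$, or equal to $h'$), you package the same facts into the identity $g_i(A\cup\{h'\})=\min\{g_i(A),c\}$ and the monotonicity of $t\mapsto\max\{t-c,0\}$, which is a slightly cleaner way of saying the same thing.
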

\begin{proof}
We first demonstrate that $f_{U,i}^* \geq f_{V,i}^*$ holds for all instances $i\in\left\{i_1,\dots,i_m\right\}$. For simplicity, we use $j^*$ to denote the index of the best heuristic for $i$ within $V$, i.e., $j^*=argmin_{1 \leq j \leq |V|}f_i(h_j)$. Therefore, we only need to discuss the following two cases: 
\begin{enumerate}
\item $1 \leq j^* \leq |U|$. This case means that the best heuristic for $i$ is in $U$. As $U \subseteq V$, we can derive $f_{U,i}^* = f_{V,i}^*$.
\item $|U| < j^* \leq |V|$. This case means that the best heuristic for $i$ is in $V$\textbackslash $U$, showing that $f_{U,i}^* \geq f_{V,i}^*$.
\end{enumerate}
These two cases indicate $f_{U,i}^* \geq f_{V,i}^*$. As $\mathcal{F}(H)$ is a convex combination of $f_{H,i}^*$, we have $\mathcal{F}(U) \geq \mathcal{F}(V)$, showing that $\mathcal{F}(H)$ is monotone.

Let $U'=U \cup \left\{h'\right\}$. We then show that $f_{H,i}^*$ is supermodular, i.e., $f_{U,i}^*-f_{U',i}^*\geq f_{V,i}^*-f_{V',i}^*$. For simplicity, we use $j^*$ to denote the index of the best heuristic for $i$ within $V'$, i.e., $j^*=argmin_{1 \leq j \leq |V|+1}f_i(h_j)$. Therefore, we only need to discuss the following three cases: 
\begin{enumerate}
\item $1 \leq j^* \leq |U|$. This case means that the best heuristic for $i$ is in $U$. Thus, we can derive $f_{U,i}^* = f_{U',i}^* = f_{V,i}^* = f_{V',i}^*$, showing that $f_{U,i}^*-f_{U',i}^*\geq f_{V,i}^*-f_{V',i}^*$.
\item $|U| < j^* \leq |V|$. This case means that the best heuristic for $i$ is in $V$\textbackslash $U$, showing that $f_{V,i}^* = f_{V',i}^*$. Since $f_{U,i}^*\geq f_{U',i}^*$, we have $f_{U,i}^*-f_{U',i}^*\geq f_{V,i}^*-f_{V',i}^*$.
\item $j^* = |V|+1$. This case means that the best heuristic for $i$ is $\boldsymbol{x}'$, showing that $f_{U',i}^* = f_{V',i}^*$. Since $f_{U,i}^*\geq f_{V,i}^*$, we have $f_{U,i}^*-f_{U',i}^*\geq f_{V,i}^*-f_{V',i}^*$.
\end{enumerate}
These three cases confirm the supermodularity of $f_{H,i}^*$. Similarly, as $\mathcal{F}(H)$ is a convex combination of $f_{H,i}^*$, $\mathcal{F}(U) - \mathcal{F}(U \cup \left\{h' \right\}) \geq \mathcal{F}(V) - \mathcal{F}(V \cup \left\{h' \right\})$ holds, showing that $\mathcal{F}(H)$ is supermodular. 
Thus, Theorem~\ref{th2} holds.
\end{proof}

Given the optimization complexity of $\mathcal{F}(H)$, evolutionary search is still a useful optimization framework. As evolution often operates on a heuristic population, and $\mathcal{F}(H)$ is monotone and supermodular, prior work~\cite{ga} has shown that the Greedy Algorithm (GA) could always identify a small-sized complementary heuristic set from such a population while providing the following theoretical performance guarantee. 

\begin{thm}[Theoretical Performance Guarantee of GA]
\label{th3}
Let $P=\left\{h_1,\dots,h_n\right\}\subset D$ be a heuristic population with $n>k$ and $H^o$ be the optimal heuristic set within $P$ for solving $\mathcal{F}(H)$. Then, the output of GA, referred to as $H^{ga}$, satisfies $\mathcal{F}(H_1) - \mathcal{F}(H^{ga}) \geq (1-k/(ek-e))(\mathcal{F}(H_1) - \mathcal{F}(H^o))$, where $H_1=\left\{argmin_{h \in P}\mathcal{F}(\left\{h\right\})\right\}$ and $e$ is the natural constant.
\end{thm}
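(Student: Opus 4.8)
The plan is to reduce this to the textbook analysis of greedy cardinality-constrained maximization of a monotone submodular function, adjusted for the fact that greedy is seeded with a nonempty singleton $H_1$ and then performs $k-1$ insertions. By Theorem~\ref{th2}, on subsets of $D$ of size at least one, $\mathcal{F}$ is non-increasing and supermodular, so $-\mathcal{F}$ is non-decreasing and submodular; I will work with $\mathcal{F}$ directly. Write the greedy chain as $H_1\subseteq H_2\subseteq\cdots\subseteq H_k=H^{ga}$, where $H_{t+1}=H_t\cup\{h^g_{t+1}\}$ with $h^g_{t+1}\in\arg\min_{h\in P\setminus H_t}\mathcal{F}(H_t\cup\{h\})$ (well defined since $n>k$), and let $H^o=\{o_1,\dots,o_k\}$; by monotonicity we may assume $|H^o|=k$. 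Set $\Delta_t=\mathcal{F}(H_t)-\mathcal{F}(H^o)\ge 0$ and $\delta_t=\mathcal{F}(H_t)-\mathcal{F}(H_{t+1})\ge 0$.

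The crux is the per-step progress inequality $\delta_t\ge\Delta_t/k$ for each $t\in\{1,\dots,k-1\}$. I would prove it in three moves. First, monotonicity gives $\Delta_t\le\mathcal{F}(H_t)-\mathcal{F}(H_t\cup H^o)$, since $H^o\subseteq H_t\cup H^o$. Second, telescoping over the elements of $H^o$ yields $\mathcal{F}(H_t)-\mathcal{F}(H_t\cup H^o)=\sum_{j=1}^{k}\bigl[\mathcal{F}(H_t\cup\{o_1,\dots,o_{j-1}\})-\mathcal{F}(H_t\cup\{o_1,\dots,o_j\})\bigr]$. Third, each summand is at most $\delta_t$: if $o_j\in H_t$ the summand equals $0$; otherwise supermodularity (Theorem~\ref{th2}, applied with small set $H_t$, large set $H_t\cup\{o_1,\dots,o_{j-1}\}$, and added element $o_j$) bounds it by $\mathcal{F}(H_t)-\mathcal{F}(H_t\cup\{o_j\})$, which is at most $\delta_t$ by the greedy choice of $h^g_{t+1}$. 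Summing the $k$ terms gives $\Delta_t\le k\,\delta_t$.

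From $\Delta_t-\Delta_{t+1}=\delta_t\ge\Delta_t/k$ one obtains $\Delta_{t+1}\le(1-1/k)\Delta_t$, hence $\Delta_k\le(1-1/k)^{k-1}\Delta_1$ after the $k-1$ greedy steps. Therefore $\mathcal{F}(H_1)-\mathcal{F}(H^{ga})=\Delta_1-\Delta_k\ge\bigl(1-(1-1/k)^{k-1}\bigr)\bigl(\mathcal{F}(H_1)-\mathcal{F}(H^o)\bigr)$, so it remains only to verify $(1-1/k)^{k-1}\le k/(ek-e)$. This is equivalent to $\bigl((k-1)/k\bigr)^{k}\le 1/e$, which is immediate from $1-x\le e^{-x}$ at $x=1/k$. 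Substituting yields the claimed bound.

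The main obstacle I anticipate is the per-step inequality: getting the direction of the supermodularity estimate right inside the telescoping sum, and correctly handling optimal-set elements that already lie in $H_t$. The seeding of greedy at a nonempty singleton (rather than at $\emptyset$) is precisely what changes the exponent from $k$ to $k-1$, turning the familiar $1-1/e$ guarantee into $1-k/(ek-e)$; the closing scalar estimate via $1-x\le e^{-x}$ is routine.
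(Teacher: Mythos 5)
Your proof is correct. Note that the paper itself does not prove Theorem~\ref{th3}: it only invokes the known greedy guarantee from cited prior work, resting on the monotonicity and supermodularity established in Theorem~\ref{th2}. Your argument supplies the missing derivation, and it is essentially the classical Nemhauser--Wolsey--Fisher telescoping analysis, correctly adapted to minimization of a monotone supermodular $\mathcal{F}$ and to a greedy procedure seeded at the best singleton $H_1$ with $k-1$ subsequent insertions: monotonicity gives $\Delta_t\le\mathcal{F}(H_t)-\mathcal{F}(H_t\cup H^o)$, telescoping over $H^o$ combined with supermodularity and the greedy choice gives $\Delta_t\le k\,\delta_t$, and iterating the recursion yields the factor $1-(1-1/k)^{k-1}$, which you then relax to $1-k/(ek-e)$ via $1-x\le e^{-x}$; in fact your intermediate bound is slightly stronger than the constant stated in the theorem. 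The delicate points — treating $o_j\in H_t$ as a zero summand, ensuring $o_j\in P\setminus H_t$ so the greedy comparison applies, padding $H^o$ to size $k$, and respecting the nonempty-set restriction $|U|\ge 1$ under which Theorem~\ref{th2} is proved (harmless here since the chain starts from the singleton $H_1$) — are all handled or immaterial. What the paper's citation buys is brevity; what your derivation buys is a self-contained proof pinned to exactly the properties proved in Theorem~\ref{th2} and to the exact greedy selection used in the CPM procedure.
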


\begin{figure*}[thbp]
    \centering
    \includegraphics[width=0.98\textwidth]{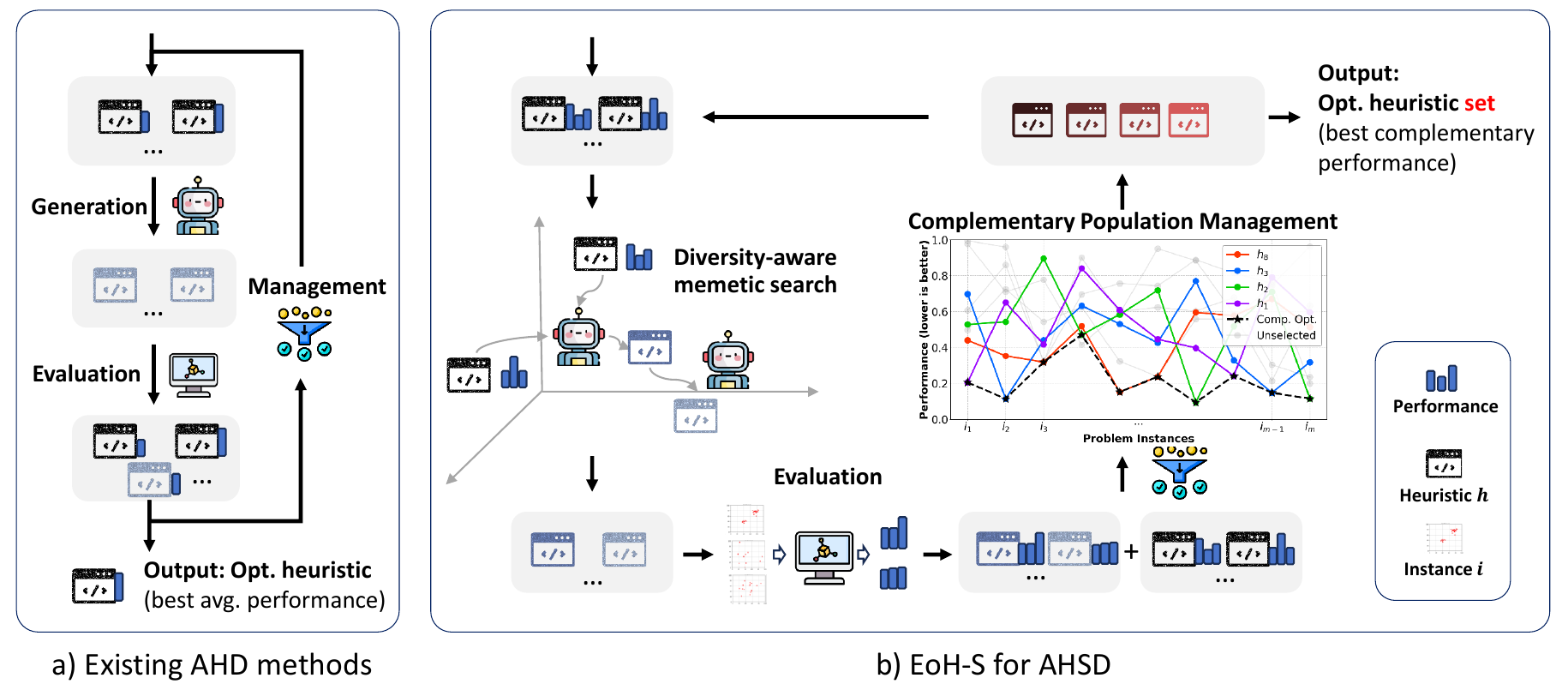}
    \caption{\textbf{a) Existing LLM-driven Automated Heuristic Design (AHD) methods} employ an iterative search framework to identify a single optimal heuristic, optimizing average performance. \textbf{b) Automated Heuristic Set Design (AHSD)} seeks to generate a set of complementary heuristics, enhancing performance across diverse problem instances. EoH-S adopts an evolutionary framework with diversity-aware memetic search and complementary population management for effective AHSD.
}
    \label{fig:framework}
\end{figure*}

\section{Evolution of Heuristic Set (EoH-S)}

\subsection{Framework Overview}
We propose an evolutionary search framework, named Evolution of Heuristic Set (EoH-S), aimed at automatically designing a set of complementary heuristics. As illustrated in Figure~\ref{fig:framework}, the existing LLM-driven AHD methods aim at generating a single optimal heuristic to optimize the average performance on the target task~\cite{liu2024evolution,romera2024mathematical,ye2024reevo,zheng2025monte}. In contrast, EoH-S is proposed for AHSD to design a set of heuristics that complement each other on diverse instances. 

Like prior work~\cite{liu2024evolution,yao2025multi}, each heuristic in EoH-S is represented by both a high-level thought description and an executable code implementation (in this paper, Python functions). However, unlike existing approaches that use average performance as fitness, EoH-S maintains an instance-wise performance vector (i.e., scores across $m$ instances) for each heuristic, enabling complementary search and management. 

The EoH-S framework operates as follows:

\begin{enumerate}
    \item \textbf{Initialization:} EoH-S begins with an initial population, $P_0$, consisting of $n$ heuristics $\{h_1, \dots, h_n\}$. These heuristics are generated by repeatedly prompting LLMs using an initialization prompt $h_i = \text{Init}(LLM, p_i)$. The initialization prompt consists of a task description and a function template. The LLM is instructed first to generate a heuristic thought and then implement its code based on the template. Due to space limitations, all prompt details used in EoH-S are in Appendix.
    
    \item \textbf{Evolutionary Cycle:} Start from the initial population, EoH-S iteratively performs the following steps:
    \begin{enumerate}
        \item \textbf{Memetic Search:} Two strategies are adopted to create $n$ new heuristics $\{h_{o_1}, \dots, h_{o_n}\}$ based on existing ones in the current population:
        \begin{enumerate}
 \item Complementary-aware search (CS): $h_o = \text{CS}(LLM, p_{cs}, h_p)$, where $p_{cs}$ is the prompt used for complementary-aware search and $h_p$ are the parent heuristics used in the prompt. In this paper, we use two parent heuristics, which are selected according to their complementary behaviour on diverse instances. CS is used to explore diverse heuristics to enhance the complementary performance of the entire heuristic set.
 \item Local search (LS): $h_o = \text{LS}(LLM, p_{ls}, h_p)$, where $p_{ls}$ is the local search prompt and $h_p$ is one selected heuristic. Local search strategy aims at revising one heuristic to search for a new heuristic close to the parent heuristic.
        \end{enumerate}
        \item \textbf{Population Management:}
        The $n$ new heuristics and $n$ heuristics from the current population are combined into a candidate pool of size $2n$: $\hat{P}_{i+1} = P_i \cup \{h_{o_1}, \dots, h_{o_n}\}$. Then, a Complementary Population Management (CPM) strategy is used to select $n$ heuristics from these $2n$ condidates to form the next population: $P_{i+1}=CPM(\hat{P}_{i+1})$
    \end{enumerate}
    
    \item \textbf{Termination:} The process is terminated when a predefined stopping criterion is met, such as reaching the maximum number of evaluated heuristics $N_{max}$.
\end{enumerate}

\subsection{Memetic Search}
EoH-S adopts a complementary-aware memetic search to effectively explore new heuristics. It integrates two reproduction strategies: \textbf{Complementary-aware Search (CS)} and \textbf{Local Search (LS)}. 

\paragraph{Complementary-aware Search (CS)}
CS encourages exploring complementary heuristics through both 1) selecting two parent heuristics $h_p$ and 2) the prompt $p_{cs}$. Specifically, two parent heuristics $h_p$ are selected based on their instance-wise performance across $m$ instances. The complementary nature between heuristics is measured using Manhattan distance $\text{d}_{h_a,h_b}$ between two heuristics $h_a$ and $h_b$. The Manhattan distance measures the sum of the absolute differences between corresponding elements, quantifying the total performance difference between two heuristics across the same set of problem instances. It is computed as:

$$
\text{d}_{h_a,h_b} = \sum_{j=1}^{m} |f_{i_j}(h_a) - f_{i_j}(h_b)|,
$$
where $F(h_a) = \{f_{i_1}(h_a), \dots, f_{i_m}(h_a)\}$ and $F(h_b) = \{f_{i_1}(h_b), \dots, f_{i_m}(h_b)\}$ denote the performance scores of heuristics $a$ and $b$ on $m$ instances. We select the pair of heuristics in the current population $P$ with the largest distance:

$$
(h_{p1}, h_{p2}) = argmax_{h_a, h_b \in P} \text{d}_{ab}.
$$

In the prompt $p_{cs}$, we inform LLM that we have this complementary pair of heuristics and instruct LLM to create new heuristics distinct from existing designs.

\paragraph{Local Search (LS)}
In contrast to CS, LS focuses on refinement of existing heuristics instead of exploring new ones. The one parent heuristic $h_p$ is selected according to a weighted random selection where better-ranked functions have higher probabilities~\cite{liu2024evolution}. The rank is sorted using the average performance on $m$ instances. This strategy promotes exploitation by producing heuristics that preserve the parent's strengths while improving performance through fine-tuned adjustments.

We generate $n$ new heuristics in memetic search using the two reproduction strategies with equal probability. Following reproduction, candidate heuristics are evaluated on a set of diverse instances. 

\subsection{Complementary Population Management}

EoH-S employs a \textbf{Complementary Population Management (CPM)} mechanism to select the $n$ heuristics from $2n$ candidate pool to form the next population. Unlike conventional LLM-driven AHD methods that select heuristics based solely on average performance, CPM leverages instance-wise performance vectors to maintain population complementarity across problem instances. It greedily select $n$ heuristics from $2n$ candidate heuristics to minimize the $\mathcal{F}(H)$ on $m$ instances, which ensures the performance guarantee established in \textbf{Theorem 3}.

Specifically, the CPM is performed as follows: Firstly, select the heuristic with average performance on $m$ instances from $2n$ heuristics to be the first one. Then, iteratively select the next heuristic to maximize the delta CPI until $n$ heuristics have been selected. Given a current heuristic set $H_k = \{h_1, \dots, h_k\}$ and a candidate heuristic $h_i$, the delta CPI is defined as:
\begin{equation}
    \Delta \text{CPI}(h_i \mid H_k) = \sum_{j=1}^m \max\left(f^*_{H_k,i_j} - f_{i_j}(h_i), 0\right),
\end{equation}
where:
\begin{itemize}
    \item $f_{i_j}(h_i)$ is the performance score of $h_i$ on instance $i_j$ (\textit{lower is better}).
    \item $f^*_{H_k,i_j} = \min_{h \in H_k} f_{i_j}(h)$ is the best score in $H_k$ for instance $i_j$.
\end{itemize}

The CPM operates as follows:
\begin{enumerate}
    \item \textbf{Initialize}: Select the heuristic with best average performance from $2n$ candidates as $h_1$.
    
    \item \textbf{Iterative Selection}:
    \begin{enumerate}
        \item For current set $H_k = \{h_1,\dots,h_k\}$, compute $\Delta \text{CPI}(h_i \mid H_k)$ for each remaining heuristic $h_i$ in $2n-k$ candidates.
        \item Select $h_{i^*} = argmax_{i \in 2n-k} \Delta \text{CPI}(h_i \mid H_k)$ and update $H_{k+1} = H_k \cup \{h_{i^*}\}$.
        \item Update reference scores: $f^*_{H_{k+1},i_j} = \min(f^*_{H_k,i_j}, f_{i_j}(h_{i^*}))$.
    \end{enumerate}
    
    \item \textbf{Terminate} when $|H_k| = n$.
\end{enumerate}


\section{Experimental Studies}

\begin{table*}[th]

\resizebox{\textwidth}{!}{\begin{tabular}{llllllllll}
\toprule
      & \multicolumn{3}{c}{Training (c100)}         & \multicolumn{6}{c}{Testing (c200-500)}  \\
\multirow{-2}{*}{Methods} & n200-500  & n500-1k & n1k-2k& n1k\_c200 & n1k\_c500 & n5k\_c200 & n5k\_c500 & n10k\_c200& n10k\_c500\\
\midrule
First Fit       & 0.0652    & 0.0318    & 0.0387    & 0.0240    & \cellcolor[HTML]{D0D0D0}\textbf{0.0124} & 0.0173    & 0.0075    & 0.0163    & 0.0055    \\
Best Fit        & 0.0627    & 0.0310    & 0.0372    & 0.0220    & \cellcolor[HTML]{D0D0D0}\textbf{0.0124} & 0.0161    & 0.0070    & 0.0154    & 0.0050    \\
\midrule
Random*& 0.3638    & 0.2988    & 0.1489    & 0.1939    & 1.3508    & 0.0376    & 0.2816    & 0.0189    & 0.1402    \\
1+1 EPS*   & 0.2482    & 0.2180    & 0.1246    & 0.0569    & 0.2390    & 0.0101    & 0.0439    & 0.0055    & 0.0224    \\
FunSearch*       & 0.2510    & 0.2000    & 0.1326    & 0.0827    & 0.6169    & 0.0193    & 0.2260    & 0.0120    & 0.1211    \\
EoH*   & 0.1216    & 0.1169    & 0.0750    & 0.0190    & \cellcolor[HTML]{D0D0D0}\textbf{0.0124} & \cellcolor[HTML]{E9E9E9}0.0044& 0.0050    & 0.0041    & 0.0042    \\
MEoH*  & 0.0991    & 0.0749    & 0.0547    & 0.1037    & 0.0149    & 0.0417    & 0.0035    & 0.0326    & \cellcolor[HTML]{E9E9E9}0.0015\\
CALM*  & 0.0805    & 0.0300    & 0.0470    & \cellcolor[HTML]{E9E9E9}0.0120& \cellcolor[HTML]{D0D0D0}\textbf{0.0124} & 0.0048    & \cellcolor[HTML]{E9E9E9}0.0030& \cellcolor[HTML]{E9E9E9}0.0034& 0.0025    \\
ReEvo* & 0.0877    & 0.0344    & 0.0556    & \cellcolor[HTML]{E9E9E9}0.0120& \cellcolor[HTML]{D0D0D0}\textbf{0.0124} & 0.0074    & 0.0040    & 0.0068    & 0.0025    \\
MCTS-AHD*  & 0.0974    & 0.0360    & 0.0596    & 0.0150    & \cellcolor[HTML]{D0D0D0}\textbf{0.0124} & 0.0046    & 0.0040    & \cellcolor[HTML]{E9E9E9}0.0034& \cellcolor[HTML]{FFFFFF}0.0022\\
\midrule
FunSearch       & 0.0623    & 0.0298    & 0.0367    & 0.0213    & \cellcolor[HTML]{D0D0D0}\textbf{0.0124} & 0.0159    & 0.0070    & 0.0152    & 0.0051    \\
FunSearch Top10 & 0.0608    & 0.0289    & 0.0361    & 0.0200    & \cellcolor[HTML]{D0D0D0}\textbf{0.0124} & 0.0153    & 0.0065    & 0.0123    & 0.0038    \\
EoH   & 0.0619    & 0.0307    & 0.0371    & 0.0240    & 0.0133    & 0.0182    & 0.0075    & 0.0170    & 0.0059    \\
EoH Top10       & 0.0618    & 0.0299    & 0.0368    & 0.0216    & \cellcolor[HTML]{D0D0D0}\textbf{0.0124} & 0.0158    & 0.0065    & 0.0152    & 0.0051    \\
ReEvo & 0.0604    & 0.0298    & 0.0362    & 0.0217    & 0.0133    & 0.0162    & 0.0068    & 0.0155    & 0.0053    \\
ReEvo Top10     & \cellcolor[HTML]{E9E9E9}0.0580& \cellcolor[HTML]{E9E9E9}0.0285& \cellcolor[HTML]{E9E9E9}0.0349& 0.0203    & \cellcolor[HTML]{D0D0D0}\textbf{0.0124} & 0.0125    & 0.0063    & 0.0108    & 0.0049    \\
EoH-S & \cellcolor[HTML]{D0D0D0}\textbf{0.0515} & \cellcolor[HTML]{D0D0D0}\textbf{0.0230} & \cellcolor[HTML]{D0D0D0}\textbf{0.0314} & \cellcolor[HTML]{D0D0D0}\textbf{0.0113} & \cellcolor[HTML]{D0D0D0}\textbf{0.0124} & \cellcolor[HTML]{D0D0D0}\textbf{0.0033} & \cellcolor[HTML]{D0D0D0}\textbf{0.0025} & \cellcolor[HTML]{D0D0D0}\textbf{0.0014} & \cellcolor[HTML]{D0D0D0}\textbf{0.0010} \\
\bottomrule
\end{tabular}}
\caption{Evaluation of heuristics designed by different methods on OBP instances. We use 128 Weibull instances with sizes ranging from 200 to 2k for training and six different sets (n1k\_c200, n1k\_c500, n5k\_c200, n5k\_c500, n10k\_c200, n10k\_c500) for testing, where n and c represent the number of items and the bin capacity, respectively. Each method is run three times and we report the average performance. We directly use the best-performing heuristics from their original papers for the methods with $*$. The best values are \textbf{in bold} with a grey background and the second-best values are in a light-grey background. }~\label{table:main_obp}

\end{table*}

\subsection{Tasks and Instances}
We investigate the following three tasks, with detailed descriptions provided in the Appendix:
\begin{itemize}
    \item Online Bin Packing (OBP) involves packing items into the minimum number of fixed-capacity bins as items arrive sequentially. The heuristic is used to select the assigned bin for each incoming item. The performance is measured as the relative gap to the lower bound of the optimal number of bins computed as in~\cite{martello1990lower}. For heuristic evaluation during automated design (training), $I$ consists of 128 Weibull instances~\cite{romera2024mathematical} with a bin capacity of 100 and the number of items ranging from 200 to 2000 (2k). For testing,  we use six sets with larger capacities {200,500} and more items {1k, 5k, 10k}, each set consists of 5 instances following existing works~\cite{romera2024mathematical,liu2024evolution}
    \item  Traveling Salesman Problem (TSP) requires finding the shortest route visiting all cities exactly once before returning to the start. We design a step-by-step construction heuristic. The heuristic is used to iteratively select the next city to visit. The average relative gap from the baseline solution generated by LKH~\cite{helsgaun2017extension} is used for performance measurement. For training, $I$ consists of 128 instances with the number of cities ranging from 10 to 200, sampled in $[0,1]$ using a Gaussian distribution in clusters following~\citet{bi2022learning}. For testing, we use instances in uniform distribution with the number of cities ranging from 50 to 500.
    \item Capacitated Vehicle Routing Problem (CVRP) extends TSP by incorporating vehicle capacity constraints and customer demands. The goal is minimizing total travel distance while ensuring all customer demands are met without exceeding vehicle capacities. We design a step-by-step construction heuristic. The heuristic is to select the next node. The average relative gap from the baseline solution generated by LKH~\cite{helsgaun2017extension} is used for performance measurement. For training, $I$ consists of 256 instances with the number of nodes from 20 to 200 and the capacities from 10 to 150. For testing, we use instances with node sizes ranging from 50 to 500. For both training and testing, the nodes are sampled in $[0,1]$ using a uniform distribution. We do not consider Gaussion distribution here because the different capacities and sizes have already introduced diversities.
\end{itemize}

\subsection{Compared Methods and Settings}

We compare state-of-the-art LLM-driven AHD methods, including \textbf{Random} (i.e., repeated prompt LLMs to generate heuristic without an iterative search framework~\cite{zhang2024understanding}),  \textbf{EoH}~\cite{liu2023algorithm,liu2024evolution},  \textbf{FunSearch}~\cite{romera2024mathematical}, \textbf{ReEvo}~\cite{ye2024reevo}, \textbf{1+1 EPS}~\cite{zhang2024understanding}, \textbf{MEoH}~\cite{yao2025multi} \textbf{MCTS-AHD}~\cite{zheng2025monte}, and \textbf{CALM}~\cite{huang2025calm}.

We evaluate these methods under two settings: \textbf{1) Direct comparison}: We directly compare the best-performing heuristics reported in the original papers for the online bin packing problem. These heuristics are all designed for the Weibull instances with slightly different distributions. \textbf{2) Controlled comparison}: For some representative methods (EoH, FunSearch, and ReEvo), we conduct training using identical instances to EoH-S, enabling a fair performance comparison. Three independent runs are performed for these methods on all three tasks. 

All experiments are conducted on LLM4AD platform~\cite{liu2024llm4ad} using \textsc{DeepSeek-V3}~\cite{liu2024deepseek} API with default parameter settings. We set the maximum number of heuristic evaluations to $N_{\text{max}} = 2,\!000$ for all tasks and fix the population size at $n = 10$ for EoH-S, EoH, and ReEvo to maintain consistency (FunSearch uses a dynamic population size). The experiments run on one Intel i7-9700 CPU, and the automated heuristic design process completes in under two hours for all methods across different tasks. It is worth noting that although EoH-S explicitly design a set of complementary heuristics, it does not introduce additional running time because we use the same maximum number of evaluations for all methods.


\subsection{Results on Training \& Testing Instances}
The results on training and testing instances of diverse distributions and sizes are summarized in Table~\ref{table:main_obp} (OBP) and Table~\ref{table:main_tsp_cvrp} (TSP and CVRP), where the best values are in bold with a grey background and the second-best values are in a light-grey background. The values are the gap to baseline results (lower bound for OBP~\cite{romera2024mathematical} and LKH for TSP and CVRP~\cite{helsgaun2017extension}). We report the average results over three independent runs. The methods with $*$ denote the best one heuristic from their original paper. For EoH-S, we use 10 heuristics in the final population. For fair comparison, we also compare the top 10 heuristics from FunSearch, EoH, and ReEvo (denoted as FunSearch Top10, EoH Top10, and ReEvo Top10, respectively). 

As indicated in Table~\ref{table:main_obp}, EoH-S consistently outperforms all baseline methods on both training instances (c100 with varying item counts) and testing instances (c200-500 with different problem sizes), achieving the lowest gap values in 8 out of 9 configurations and matching the best performance in the remaining one. In contrast, existing LLM-driven AHD methods from the literature struggle to generalize across different distributions, with some even underperforming first-fit and best-fit heuristics, which is also observed by~\citet{xu2010hydra}.

Table~\ref{table:main_tsp_cvrp} shows that EoH-S performs consistently well on TSP and CVRP. On TSP, EoH-S significantly reduces the optimality gap by 50-60\% compared to the second-best method across all test instances of varying sizes (50-500 nodes). For CVRP, EoH-S again achieves the lowest optimality gaps on both training and testing instances, with particularly significant improvements on smaller problem instances. 


\begin{table}[t] 
\resizebox{\linewidth}{!}{
\begin{tabular}{llllll}
\toprule
\multicolumn{1}{c}{}           & \multicolumn{1}{c}{}& \multicolumn{4}{c}{Testing}\\
\multicolumn{1}{c}{\multirow{-2}{*}{Methods}} & \multicolumn{1}{c}{\multirow{-2}{*}{Training}} & 50       & 100      & 200      & 500      \\
\midrule
FunSearch       & 0.156& 0.144    & 0.149    & 0.169    & 0.185    \\
FunSearch Top10 & 0.124& 0.091    & \cellcolor[HTML]{E9E9E9}0.105          & \cellcolor[HTML]{E9E9E9}0.124          & \cellcolor[HTML]{E9E9E9}0.157          \\
EoH & 0.152& 0.142    & 0.154    & 0.168    & 0.194    \\
EoH Top10      & 0.127& \cellcolor[HTML]{E9E9E9}0.079          & 0.123    & 0.134    & 0.175    \\
ReEvo           & 0.155& 0.167    & 0.183    & 0.223    & 0.221    \\
ReEvo Top10    & \cellcolor[HTML]{E9E9E9}0.117   & 0.092    & 0.123    & 0.157    & 0.170    \\
EoH-S           & \cellcolor[HTML]{D0D0D0}\textbf{0.097}         & \cellcolor[HTML]{D0D0D0}\textbf{0.040} & \cellcolor[HTML]{D0D0D0}\textbf{0.065} & \cellcolor[HTML]{D0D0D0}\textbf{0.090} & \cellcolor[HTML]{D0D0D0}\textbf{0.111} \\
\midrule
\midrule
\multicolumn{1}{c}{}           & \multicolumn{1}{c}{}& \multicolumn{4}{c}{Testing}\\
\multicolumn{1}{c}{\multirow{-2}{*}{Methods}} & \multicolumn{1}{c}{\multirow{-2}{*}{Training}} & 50       & 100      & 200      & 500      \\
\midrule
FunSearch       & 0.294& 0.315    & 0.365    & 0.296    & 0.241    \\
FunSearch Top10 & 0.245& 0.267    & 0.309    & 0.247    & 0.214    \\
EoH & 0.276& 0.274    & 0.299    & 0.261    & 0.221    \\
EoH Top10      & \cellcolor[HTML]{E9E9E9}0.230& \cellcolor[HTML]{E9E9E9}0.172          & \cellcolor[HTML]{E9E9E9}0.232          & 0.218    & 0.198    \\
ReEvo           & 0.265& 0.274    & 0.296    & 0.256    & 0.203    \\
ReEvo Top10    & 0.240   & 0.214    & 0.249    & \cellcolor[HTML]{E9E9E9}0.173          & \cellcolor[HTML]{E9E9E9}0.178          \\
EoH-S           & \cellcolor[HTML]{D0D0D0}\textbf{0.173}         & \cellcolor[HTML]{D0D0D0}\textbf{0.135} & \cellcolor[HTML]{D0D0D0}\textbf{0.188} & \cellcolor[HTML]{D0D0D0}\textbf{0.180} & \cellcolor[HTML]{D0D0D0}\textbf{0.169} \\
\bottomrule 
\end{tabular}
}
\caption{Evaluation of heuristics designed by different methods on TSP (upper) and CVRP (lower). We use 128 TSP instances with sizes ranging from 10 to 200 and 256 CVRP instances with sizes ranging from 20 to 200 for training, and four different sets with sizes ranging from 50 to 500 for testing. Each method is run three times, and we report the average performance. The best values are \textbf{in bold} with a grey background and the second-best values are in a light-grey background. }~\label{table:main_tsp_cvrp}

\end{table}

\begin{table}[t]
\resizebox{\linewidth}{!}{
\begin{tabular}{llllll}
\toprule
\multicolumn{1}{c}{}     & \multicolumn{2}{c}{EoH}   & \multicolumn{2}{c}{ReEvo} & \multicolumn{1}{c}{}\\
\multicolumn{1}{c}{\multirow{-2}{*}{Benchmarks}} & Top 1 & Top 10& Top 1 & Top 10& \multicolumn{1}{c}{\multirow{-2}{*}{EoH-S}} \\
\midrule
BPPLib Sch\_1       & \cellcolor[HTML]{E9E9E9}0.153 & \cellcolor[HTML]{E9E9E9}0.153 & \cellcolor[HTML]{E9E9E9}0.153 & \cellcolor[HTML]{E9E9E9}0.153 & \cellcolor[HTML]{D0D0D0}\textbf{0.152}      \\
BPPLib Sch\_2       & \cellcolor[HTML]{E9E9E9}0.142 & \cellcolor[HTML]{E9E9E9}0.142 & \cellcolor[HTML]{E9E9E9}0.142 & \cellcolor[HTML]{E9E9E9}0.142 & \cellcolor[HTML]{D0D0D0}\textbf{0.141}      \\
BPPLib IRUP     & 0.077 & 0.075 & 0.079 & \cellcolor[HTML]{E9E9E9}0.074 & \cellcolor[HTML]{D0D0D0}\textbf{0.072}      \\
BPPLib NonIRUP  & 0.077 & 0.076 & 0.080 & \cellcolor[HTML]{E9E9E9}0.075 & \cellcolor[HTML]{D0D0D0}\textbf{0.073}      \\
BPPLib Scholl\_H      & \cellcolor[HTML]{E9E9E9}0.138 & \cellcolor[HTML]{E9E9E9}0.138 & \cellcolor[HTML]{E9E9E9}0.138 & \cellcolor[HTML]{E9E9E9}0.138 & \cellcolor[HTML]{D0D0D0}\textbf{0.095}      \\
\midrule
TSPLib       & 0.184 & 0.173 & 0.220 & \cellcolor[HTML]{E9E9E9}0.165 & \cellcolor[HTML]{D0D0D0}\textbf{0.093}      \\
\midrule
CVRPLib A    & 0.326 & 0.277 & 0.329 & \cellcolor[HTML]{E9E9E9}0.261 & \cellcolor[HTML]{D0D0D0}\textbf{0.231}      \\
CVRPLib B    & 0.374 & 0.310 & 0.352 & \cellcolor[HTML]{E9E9E9}0.250 & \cellcolor[HTML]{D0D0D0}\textbf{0.183}      \\
CVRPLib E    & 0.334 & 0.268 & 0.305 & \cellcolor[HTML]{E9E9E9}0.257 & \cellcolor[HTML]{D0D0D0}\textbf{0.242}      \\
CVRPLib F    & 0.539 & \cellcolor[HTML]{E9E9E9}0.493 & 0.687 & 0.547 & \cellcolor[HTML]{D0D0D0}\textbf{0.427}      \\
CVRPLib M    & 0.461 & 0.377 & 0.442 & \cellcolor[HTML]{E9E9E9}0.372 & \cellcolor[HTML]{D0D0D0}\textbf{0.299}      \\
CVRPLib P    & 0.273 & 0.206 & 0.270 & \cellcolor[HTML]{E9E9E9}0.188 & \cellcolor[HTML]{D0D0D0}\textbf{0.169}      \\
CVRPLib X    & 0.270 & 0.237 & 0.270 & \cellcolor[HTML]{E9E9E9}0.224 & \cellcolor[HTML]{D0D0D0}\textbf{0.196}     \\
\bottomrule
\end{tabular}}
\caption{Results on OPPLib, TSPLib, and CVRPLib Benchmarks. The best values are \textbf{in bold} with a grey background and the second-best values are in a light-grey background.}~\label{table:benchmark}
\end{table}

\subsection{Results on Benchmark Instances}

Tabel~\ref{table:benchmark} presents a comprehensive evaluation of our proposed EoH-S method against existing approaches (EoH and ReEvo) across multiple benchmark datasets, including BPPLib~\cite{delorme2018bpplib}, TSPLib~\cite{reinelt1991tsplib}, and CVRPLib~\cite{uchoa2017new}. Our proposed EoH-S method consistently outperforms both baseline methods across all benchmark instances. On the BPPLib datasets, EoH-S achieves superior results, with particularly notable improvement on the Scholl\_H instance (0.095 compared to 0.138 for both baselines). 

For the TSPLib benchmark, EoH-S demonstrates substantial performance gains with a score of 0.093, representing a 43.6\% improvement over the second-best result (0.165 from ReEvo Top 10). On the CVRPLib benchmarks, EoH-S achieves improvements ranging from 5.8\% (on set E) to 26.8\% (on set B) compared to the second-best results. These results clearly demonstrate the effectiveness and robustness of our proposed approach in designing complementary heuristics.

\begin{figure}[t]  
    \centering
    \begin{subfigure}[b]{0.9\linewidth}
        \centering
        \includegraphics[width=\linewidth]{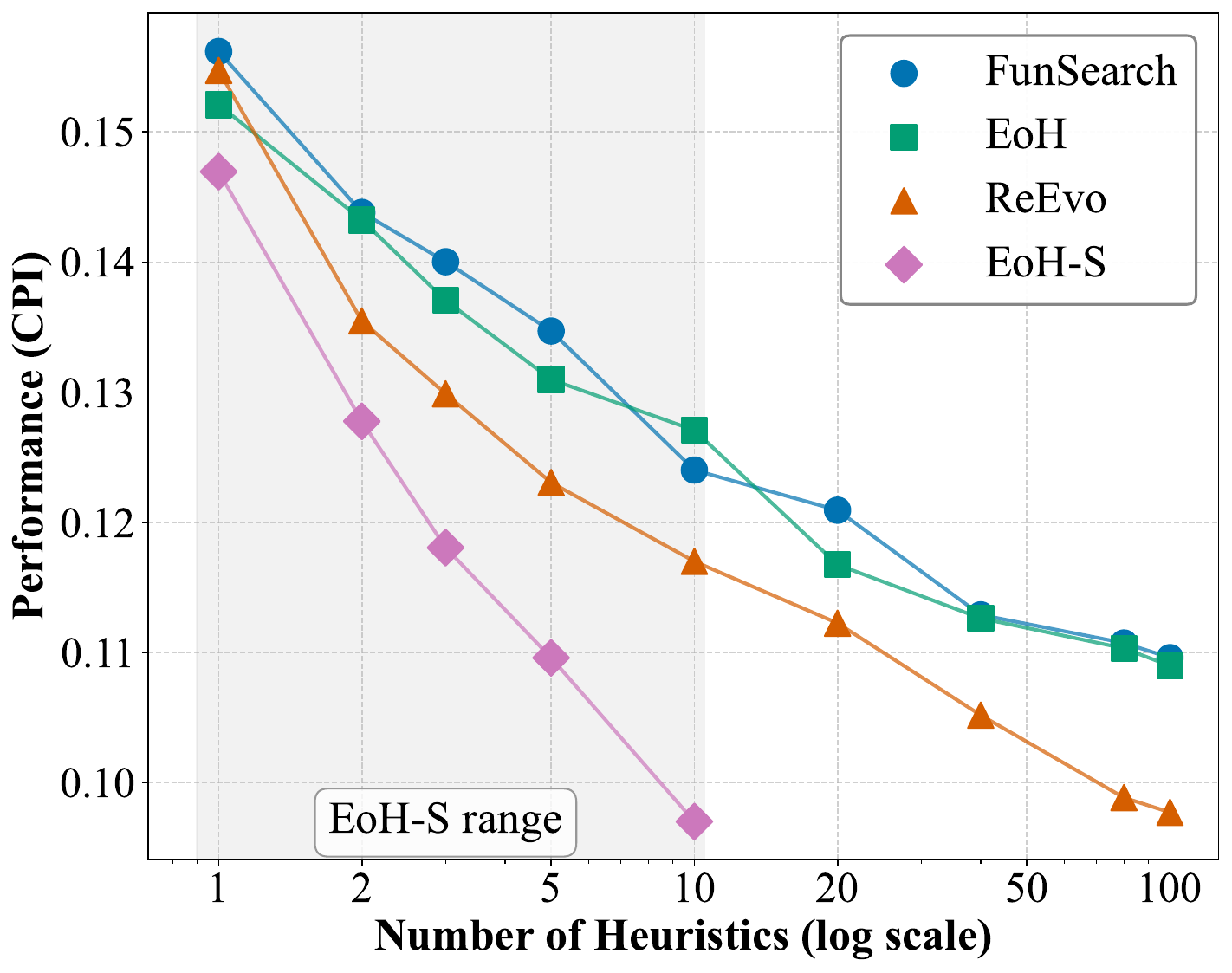}
        \label{fig:comp_tsp}
    \end{subfigure}
    
    
    \begin{subfigure}[b]{0.9\linewidth}
        \centering
        \includegraphics[width=\linewidth]{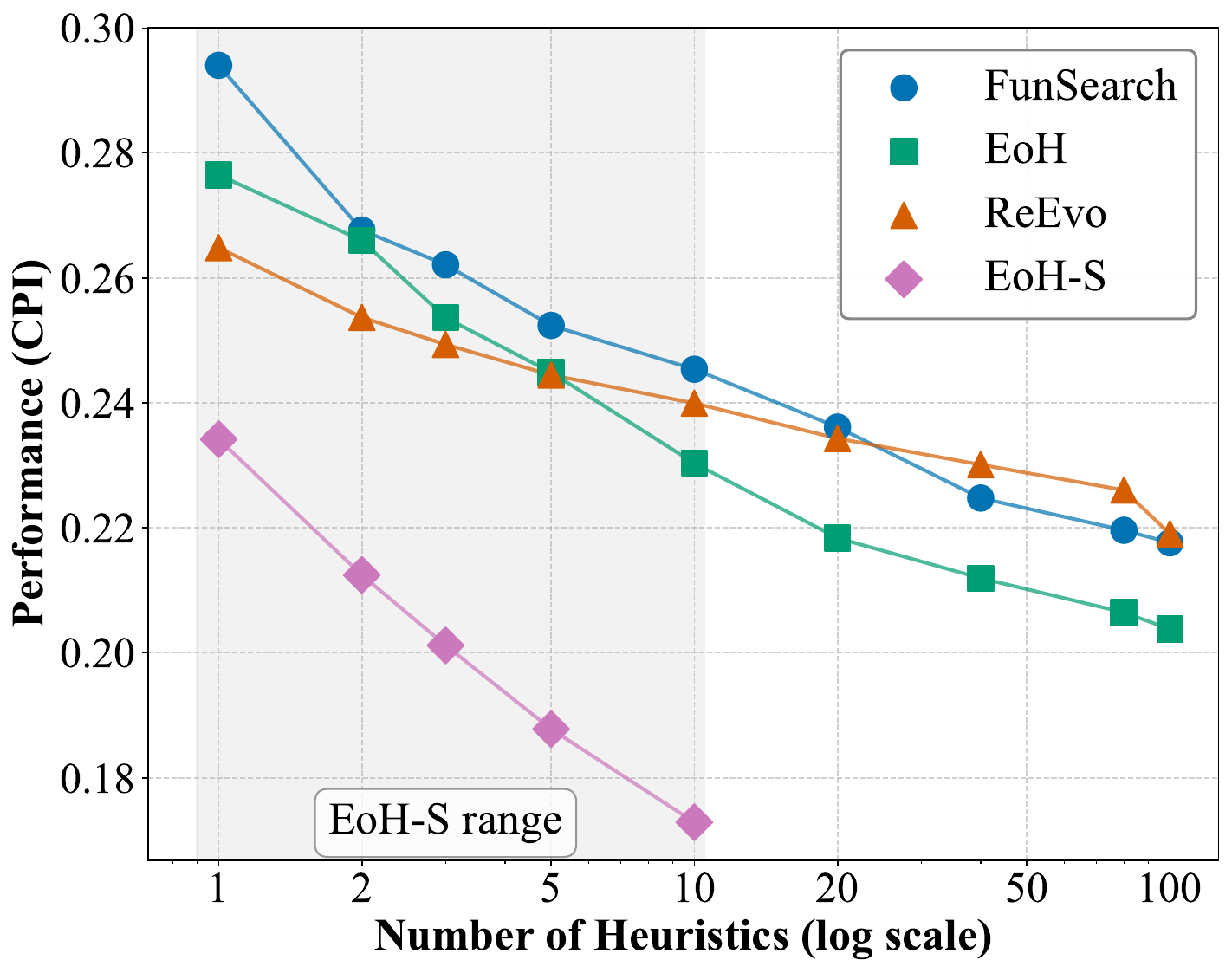}
        \label{fig:comp_cvrp}
    \end{subfigure}
    \caption{Complementary Performance Index (CPI) comparison across methods with varying numbers of heuristics. For EoH-S, we select heuristics from the final population (maximum 10), while for other methods, we select the best 100 heuristics from the entire search history.}
    \label{fig:comparison}
\end{figure}

\subsection{Complementary Performance}

We evaluate the complementary performance of heuristic sets generated by different methods. Figure~\ref{fig:comparison} compares the performance (averaged over three independent runs) of heuristic sets as the number of heuristics increases (on a logarithmic scale from 1 to 100). The CPI (Complementary Performance Index) measures the gap to the lower bound.

For EoH-S, we use the final population of heuristics, capped at 10. For EoH, FunSearch, and ReEvo, we select the top 100 heuristics from all candidates evaluated during the search process. When the set size is 1, we report the performance of the single best heuristic (in terms of average performance). For a set size of 10, we consider the full heuristic set for EoH-S and the top 10 heuristics for other methods. The CPI drop observed when expanding from a single heuristic to a larger set reflects the degree of complementarity, i.e., how effectively additional heuristics contribute to solving diverse problem instances.



Our results demonstrate three key findings: First, the heuristic set designed by EoH-S consistently outperforms those designed by the compared methods. Second, even when utilizing 100 heuristics, the competing methods struggle to match the performance achieved by just 10 heuristics designed by EoH-S. Third, each heuristic in the EoH-S set makes a meaningful contribution to the overall performance. The significantly steeper performance curve of EoH-S compared to other methods indicates that its heuristics provide greater complementary benefits.

\subsection{Ablation Studies}
We conduct ablation studies on OBP to evaluate the contribution of each key component and the impact of population size. The following variants of EoH-S are examined:

\begin{itemize}
\item EoH-S w/o LS: Excludes the Local Search operator (LS).
\item EoH-S w/o CS: Excludes the Complementary-aware Search operator (CS).
\item EoH-S w/o CPM: Disables Complementary Population Management (CPM) and instead uses a population management approach in existing works (retaining the best-performing heuristics based on average performance).
\item EoH-S n5–n20: Tests EoH-S with population sizes ranging from 5 to 20.
\end{itemize}

The experimental setup matches the main experiments, with 2,000 total samples per run and a population size 10.

Table~\ref{tab:ablation} presents the average gap to the lower bound across 128 OBP training instances. The first three methods are two hand-designed heuristics and the ReEvo Top 10 (the second-best one, as indicated in Table~\ref{table:main_obp}). The No. 4 to No. 6 are three variants of EoH-S that assess the importance of the search operators and CPM. Results show that all components contribute to performance, with CPM providing the most significant improvements. The remaining variants (No. 7–9) demonstrate that EoH-S consistently outperforms existing methods across population sizes (5–20), with a size of 10 proving optimal for this task. Results demonstrate the contribution of different components and the robustness of EoH-S across different settings.

\begin{table}[tbp]
\centering
\small
\begin{tabular}{lll}
\toprule
No. & Settings      & Gap    \\
\midrule
1   & First Fit     & 0.0413 \\
2   & Best Fit      & 0.0399 \\
3   & ReEvo Top 10  & 0.0371 \\
\midrule
4   & EoH-S w/o LS  & 0.0336 \\
5   & EoH-S w/o CS  & 0.0335 \\
6   & EoH-S w/o CPM & 0.0373 \\
\midrule
7   & EoH-S n5      & 0.0333 \\
8   & EoH-S n15     & 0.0339 \\
9   & EoH-S n20     & 0.0337 \\
\midrule
10  & EoH-S         & 0.0326 \\
\bottomrule
\end{tabular}
\caption{Ablation study of key components and settings}~\label{tab:ablation}
\end{table}

\section{Conclusion and Future Works}

In this work, we introduced Automated Heuristic Set Design (AHSD), a novel formulation for LLM-driven AHD that generates a small yet complementary set of heuristics to optimize diverse problem instances effectively. We demonstrated that the AHSD objective function is monotone and supermodular, providing a theoretical foundation for efficient heuristic set optimization. We proposed Evolution of Heuristic Set (EoH-S), which integrates a complementary population management and a memetic search for effectively evolving high-quality and complementary heuristics. Extensive experiments across multiple AHD tasks validated the superiority of EoH-S, achieving up to 60\% performance improvements over state-of-the-art AHD methods. Notably, EoH-S demonstrated strong generalization capabilities using a small set of heuristics.

Our work advances LLM-driven heuristic design by shifting from single-best heuristic design to complementary heuristic set design. Future research directions include exploring heuristic collaboration strategies for further performance gains and additional application studies.


\bibliography{eohs}

\clearpage
\newpage
\appendix
\onecolumn


    
        
        

    
    

\section{Related Works}~\label{sec:related_works}
\subsection{Automated Heuristic Design (AHD)}
Automated heuristic design, often referred to as hyperheuristics~\cite{burke2018classification,stutzle2018automated}, which are designed to automate the selection, combination, generation, or adaptation of simpler heuristics to efficiently solve computational search problems~\cite{pillay2018hyper}. 


In automated heuristic generation, generation constructive hyperheuristics synthesize new heuristics, often via genetic programming or grammatical evolution~\cite{o2001grammatical}, producing either instance-specific or reusable heuristics. Generation perturbative methods combine existing perturbative heuristics with acceptance criteria. Recent work extends these classifications through component-based unified solvers, integrating diverse operators and stages to form new algorithms~\cite{bezerra2015automatic,qu2020general}, which leverage fundamental components to address a wide range of optimization problems. 

Although methods like genetic programming~\cite{langdon2013foundations} and recent unified solvers~\cite{bezerra2015automatic,qu2020general} offer a flexible and explainable approach to algorithm design, they still necessitates tailored hand-crafted components and much domain-specific knowledge.


\subsection{Neural Combinatorial Optimization}

In the past decade, neural combinatorial optimization has emerged as a promising paradigm that trains neural networks to learn heuristics for solving complex optimization problems \cite{bengio2021machine}. This approach has garnered significant attention due to its effectiveness and computational efficiency in generating high-quality solutions.

Cross-distribution learning is a key research direction within this field that aim to enhance generalization performance across diverse problem distributions and scales. For instance, \citet{jiang2022learning} and \citet{bi2022learning} investigated robust optimization techniques across multiple geometric distributions. Complementary work by several researchers \cite{fu2021generalize, pan2023h-tsp, manchanda2023generalization, luo2023neural} has explored methods to improve generalization to larger problem instances. Moreover, recent attempts~\cite{zhou2023towards,liu2024prompt} have expanded it by simultaneously addressing both problem size and geometric distribution variations.

Despite these notable advances, neural combinatorial solvers face significant challenges. They typically require substantial expertise in model architecture design and significant computational resources for training. Furthermore, the inherent black-box nature of these neural approaches limits their interpretability, which presents a considerable barrier to their adoption in real-world applications where solution transparency and explainability are often critical requirements.



\subsection{LLM-driven AHD}
Among existing LLM-driven AHD methods, a prominent strategy involves iteratively using LLMs to design and refine algorithms within an evolutionary framework~\cite{zhang2024understanding}. This paradigm has demonstrated promising results across diverse algorithm design tasks, including combinatorial optimization, Bayesian optimization, and black-box optimization~\cite{van2024llamea}. Notable examples include EoH~\cite{liuevolution}, which evolves both thoughts and code through five distinct prompt strategies to guide effective algorithm search, significantly enhancing the exploration of the solution space. FunSearch~\cite{romera2024mathematical} employs a multi-island evolutionary framework but relies on a single prompt strategy to instruct LLMs in algorithm refinement. Moreover, ReEvo~\cite{ye2024reevo} incorporates short- and long-term reflection strategies into the evolutionary process to provide additional guidance to LLMs. In addition to evolutionary search, other search frameworks, such as Monte Carlo tree search (MCTS)~\cite{zheng2025monte} and neighborhood search~\cite{xie2025llm}, have also been investigated to enhance the efficiency. 

While these methods have achieved success, they mainly focus on finding a single heuristic that has optimal average performance, which usually suffers from poor generalization performance beyond their training distribution~\cite{sim2025beyond,shi2025generalizable}.

\clearpage
\section{Heuristic Design Tasks \& Setups}

\subsection{Online Bin Packing (OBP)}~\label{apd:tasks}

\subsubsection*{Problem Definition}  
OBP involves packing a sequence of items into a minimum number of bins, each with a fixed capacity $C \in \mathbb{R}^+$. Items arrive sequentially and must be assigned to a bin upon arrival, without knowledge of future items.

\begin{itemize}  
    \item Let $\mathcal{I} = \{i_1, i_2, \dots, i_n\}$ denote a sequence of $n$ items, where each item $i_j$ has a \textbf{size} $s_j \in (0, C]$.  
    \item A \textbf{bin} is a container with capacity $C$, and a packing is valid if the sum of item sizes in any bin does not exceed $C$:  
    $$
    \forall \text{ bin } B_k, \sum_{i_j \in B_k} s_j \leq C.
    $$  
    \item The objective is to minimize the total number of bins used:  
    $$
    K = |\{B_1, B_2, \dots, B_K\}|.
    $$  
\end{itemize}  

In the online setting: 1) Items arrive one-by-one in an unknown order. 2) Each item $i_j$ must be assigned to a bin immediately upon arrival. 3) No reallocation or rearrangement of previously packed items is allowed.

\subsubsection*{Designed Heuristic}  
The design heuristic is used to calculate the priorities of the bins for each incoming item, given the size of the item and the rest capacities of the bins. The heuristic is implemented as a Python function. The performance is measured by the average relative gap to lower bound of the number of used bins.

\subsubsection*{Data Generation}  
\begin{itemize}
\item \textbf{Training Instances:} For training, we sample 128 instances using the Weibull distribution with shape parameters $k \in \{1, 3, 5\}$ and scale parameters $\lambda \in \{5, 10, 20, 40, 80\}$. The number of items per instance ranges from 200 to 2000, randomly sampled from this interval. All training instances use a fixed capacity of 100.

\item \textbf{Testing Instances:} For testing, we create six sets with larger capacities $\{200, 500\}$ and more items $\{1\text{k}, 5\text{k}, 10\text{k}\}$. Each set contains 5 instances, following the settings established in prior work~\cite{romera2024mathematical,liu2024evolution}.

\end{itemize}

The task description and template used in the prompts are as follows:

\begin{tcolorbox}[colback=gray!5, colframe=gray!50]
\textbf{OBP Task Description:} Implement a function that returns the priority with which we want to add an item to each bin.
\end{tcolorbox}

\begin{figure}[ht]
\centering
\begin{mdframed}[
    linecolor=gray!50,
    linewidth=1pt,
    roundcorner=5pt,
    backgroundcolor=gray!5,
    innertopmargin=10pt,
    innerbottommargin=10pt,
    innerrightmargin=10pt,
    innerleftmargin=10pt,
    frametitlebackgroundcolor=gray!20,
    frametitleaboveskip=3pt,
    frametitlebelowskip=6pt,
    frametitlefont=\bfseries\sffamily
]
\begin{lstlisting}[
    language=Python,
    basicstyle=\small\ttfamily,
    keywordstyle=\color{blue!70!black}\bfseries,
    commentstyle=\color{green!60!black}\itshape,
    stringstyle=\color{purple!70!black},
    numberstyle=\tiny\color{gray!70!black},
    numbers=left,
    numbersep=8pt,
    breaklines=true,
    showstringspaces=false,
    frame=none,
    tabsize=4,
    captionpos=b,
    backgroundcolor=\color{gray!3}
]
import numpy as np
def priority(item: float, bins: np.ndarray) -> np.ndarray:
    """Returns priority with which we want to add item to each bin.
    Args:
        item: Size of item to be added to the bin.
        bins: Array of capacities for each bin.
    Return:
        Array of same size as bins with priority score of each bin.
    """
    return item - bins
\end{lstlisting}
\end{mdframed}
\end{figure}

\subsection{Traveling Salesman Problem (TSP)} 

\subsubsection{Problem Definition:} 
    Let $G = (V, E)$ be a complete graph where: $V = \{v_1, \dots, v_n\}$ represents $n$ cities with coordinates $\mathbf{x}_i \in [0,1]^2$ and $E$ contains edges with costs $c_{ij} = \|\mathbf{x}_i - \mathbf{x}_j\|_2$ (we consider Euclidean distance).

    The objective is to find a Hamiltonian cycle $\pi = (\pi_1, \dots, \pi_n, \pi_1)$ minimizing:
    $$
    \mathcal{L}_{\text{TSP}} = \sum_{k=1}^{n-1} c_{\pi_k \pi_{k+1}} + c_{\pi_n \pi_1}.
    $$
    
\subsubsection{Designed Heuristic:}

We target a construction heuristic that iteratively builds the tour by selecting the next city. The designed heuristic is the ID of the next city, given the current city, unvisited cities and the distance matrix. Performance is measured by the average relative gap to LKH-3~\cite{helsgaun2017extension}:

\subsubsection{Data Generation:}
    \begin{itemize}
        \item \textbf{Training Instances}: 128 instances where cities are arranged in clustered patterns. These distributions generate between $10$-$200$ cities across $32$ instances (each configuration 32 instances), with four different clustering configurations that vary in both the number of clusters ($3$ or $10$) and the standard deviation of points within each cluster ($0.03$ or $0.07$). The distribution is implemented by first generating random cluster centers within a bounded region ($0.2$--$0.8$), then sampling city locations from normal distributions centered at these points in $[0,1]^2$, with the specified standard deviation controlling how tightly or loosely cities are grouped around their cluster centers~\cite{bi2022learning}.

        \item \textbf{Testing Instances:} 80 instances with city counts of 50, 100, 200, 500, and 1000. 16 instances per size where city coordinates are uniformly distributed in $[0,1]^2$. 
 \end{itemize}



The task description and template used are as follows:

\begin{tcolorbox}[colback=gray!5, colframe=gray!50]
\textbf{TSP Task Description:} Given a set of nodes with their coordinates, you need to find the shortest route that visits each node once and returns to the starting node. The task can be solved step-by-step by starting from the current node and iteratively choosing the next node. Help me design a novel algorithm that is different from the algorithms in literature to select the next node in each step.
\end{tcolorbox}

\begin{figure}[ht]
\centering
\begin{mdframed}[
    linecolor=gray!50,
    linewidth=1pt,
    roundcorner=5pt,
    backgroundcolor=gray!5,
    innertopmargin=10pt,
    innerbottommargin=10pt,
    innerrightmargin=10pt,
    innerleftmargin=10pt,
    frametitlebackgroundcolor=gray!20,
    frametitleaboveskip=3pt,
    frametitlebelowskip=6pt,
    frametitlefont=\bfseries\sffamily
]
\begin{lstlisting}[
    language=Python,
    basicstyle=\small\ttfamily,
    keywordstyle=\color{blue!70!black}\bfseries,
    commentstyle=\color{green!60!black}\itshape,
    stringstyle=\color{purple!70!black},
    numberstyle=\tiny\color{gray!70!black},
    numbers=left,
    numbersep=8pt,
    breaklines=true,
    showstringspaces=false,
    frame=none,
    tabsize=4,
    captionpos=b,
    backgroundcolor=\color{gray!3}
]
import numpy as np
def select_next_node(current_node: int, destination_node: int, unvisited_nodes: np.ndarray, distance_matrix: np.ndarray) -> int: 
    """
    Design a novel algorithm to select the next node in each step.

    Args:
    current_node: ID of the current node.
    destination_node: ID of the destination node.
    unvisited_nodes: Array of IDs of unvisited nodes.
    distance_matrix: Distance matrix of nodes.

    Return:
    ID of the next node to visit.
    """
    next_node = unvisited_nodes[0]
    return next_node
\end{lstlisting}
\end{mdframed}
\end{figure}

\subsection{Capacitated Vehicle Routing Problem (CVRP)} 

\subsubsection{Problem Definition:}

CVRP aims to minimize the total traveling distances of a fleet of vehicles given a depot and a set of customers with coordinates and demands. Given: 1) Depot $v_0$ and customers $\{v_1,...,v_n\}$ with coordinates $\mathbf{x}_i \in [0,1]^2$, 2) Demands $d_i \in \mathbb{Z}^+$ ($d_0=0$), 3) Vehicle capacity $Q \in \mathbb{Z}^+$, 4) Distance metric $c_{ij} = \|\mathbf{x}_i - \mathbf{x}_j\|_2$, find routes $\mathcal{R} = \{r_1,...,r_m\}$. Each route $r_k$ starts/ends at $v_0$. Capacity constraints are satisfied $\sum_{v_i \in r_k} d_i \leq Q$ and all customers served exactly once. The objective is to minimize total distance.

\subsubsection{Designed Heuristic:}
Evaluation uses the same gap metric as TSP against 
    
\subsubsection{Data Generation:}
    \begin{itemize}
        \item \textbf{Training Instances:} 256 instances with the number of nodes $n \sim \mathcal{U}\{20,200\}$ nodes (including depot), capacity $Q \sim \mathcal{U}\{10,150\}$, and demands $d_i \sim \mathcal{U}\{1,10\}$. The coordinates are generated using uniform distribution in $[0,1]^2$

        \item \textbf{Testing Instances:} 128 test instances are generated with $n_c \in \{50, 100, 200, 500\}$ cities, where each instance contains uniformly distributed coordinates in $[0,1]^2$, random demands $d_i \sim \mathcal{U}\{1,10\}$, and vehicle capacities $Q \sim \mathcal{U}\{40,150\}$. For each city size, 32 instances are created.
    \end{itemize}



The task description and template used are as follows:

\begin{tcolorbox}[colback=gray!5, colframe=gray!50]
\textbf{CVRP Task Description:} Given a set of customers and a fleet of vehicles with limited capacity, the task is to design a novel algorithm to select the next node in each step, with the objective of minimizing the total cost.
\end{tcolorbox}

\begin{figure}[htbp]
\centering
\begin{mdframed}[
    linecolor=gray!50,
    linewidth=1pt,
    roundcorner=5pt,
    backgroundcolor=gray!5,
    innertopmargin=10pt,
    innerbottommargin=10pt,
    innerrightmargin=10pt,
    innerleftmargin=10pt,
    frametitlebackgroundcolor=gray!20,
    frametitleaboveskip=3pt,
    frametitlebelowskip=6pt,
    frametitlefont=\bfseries\sffamily
]
\begin{lstlisting}[
    language=Python,
    basicstyle=\small\ttfamily,
    keywordstyle=\color{blue!70!black}\bfseries,
    commentstyle=\color{green!60!black}\itshape,
    stringstyle=\color{purple!70!black},
    numberstyle=\tiny\color{gray!70!black},
    numbers=left,
    numbersep=8pt,
    breaklines=true,
    showstringspaces=false,
    frame=none,
    tabsize=4,
    captionpos=b,
    backgroundcolor=\color{gray!3}
]
import numpy as np
def select_next_node(current_node: int, depot: int, unvisited_nodes: np.ndarray, rest_capacity: np.ndarray, demands: np.ndarray, distance_matrix: np.ndarray) -> int:
    """Design a novel algorithm to select the next node in each step.
    Args:
        current_node: ID of the current node.
        depot: ID of the depot.
        unvisited_nodes: Array of IDs of unvisited nodes.
        rest_capacity: rest capacity of vehicle
        demands: demands of nodes
        distance_matrix: Distance matrix of nodes.
    Return:
        ID of the next node to visit.
    """
    best_score = -1
    next_node = -1
    for node in unvisited_nodes:
        demand = demands[node]
        distance = distance_matrix[current_node][node]
        if demand <= rest_capacity:
 score = demand / distance 
 if score > best_score:
     best_score = score
     next_node = node
    return next_node
\end{lstlisting}
\end{mdframed}
\end{figure}

\newpage

\section{Method Details}

We present the detailed prompts used in EoH-S. We have three different prompt engineering approaches for: 1) generating heuristics for the initial population, 2) diversity-aware search, and 3) local search.

I'll enhance the illustration by using different colours for each part of the prompt where:
\begin{itemize}
    \item \textcolor{purple}{\{Task description\}} and \textcolor{purple}{\{Task template\}} are the task description and template introduced in the last task section.
    \item \textcolor{teal}{\{Heuristics\}} are selected parent heuristic(s).

    \item \textbf{Black sentences} are instructions for generating thoughts and codes.

    \item \textcolor{brown}{Broan sentences} are strategy-specific prompts for complementary-aware search and local search.

    \item \textcolor{gray!70}{Gray words} are additional instructions for robust and efficient inference.

\end{itemize}

\subsection{Initialization}

The initialization phase generates the initial population of heuristic algorithms through the following prompt:

\begin{tcolorbox}[colback=blue!5, colframe=blue!50, title=\textbf{Initialization Prompt}, fonttitle=\bfseries]
\textcolor{purple}{\{task description\}}

First, describe your new algorithm and main steps in one sentence. The description must be inside within boxed \textcolor{orange}{\{\{\}\}}.

Next, implement the following Python function:\\
\textcolor{purple}{\{task template\}}

\textcolor{gray!70}{Do not give additional explanations.}
\end{tcolorbox}

This prompt instructs the language model to generate a concise description of a new algorithm along with its implementation according to the specified function template. The description is required to be enclosed within double curly braces for easy extraction.

\subsection{Diversity-aware Search}

The diversity-aware search phase aims to generate new algorithms that are distinct from existing ones in the population:

\begin{tcolorbox}[colback=green!5, colframe=green!50, title=\textbf{Diversity-aware Search Prompt}, fonttitle=\bfseries]
\textcolor{purple}{\{task description\}}

I have 2 existing algorithms with their codes as follows:\\
\textcolor{teal}{\{heuristics\}}\\

\textcolor{brown}{These algorithms are effective for solving different instance distributions. Please help me create a new algorithm that is different from the given ones.} \\

First, describe your new algorithm and main steps in one sentence. The description must be inside within boxed \textcolor{orange}{\{\{\}\}}.

Next, implement the following Python function:\\
\textcolor{purple}{\{task template\}}

\textcolor{gray!70}{Do not give additional explanations.}
\end{tcolorbox}

This prompt explicitly requests the language model to create an algorithm that differs from the existing ones in the population, promoting diversity in the search space. The existing algorithms are provided as context to guide the model toward unexplored algorithmic approaches.

\subsection{Local Search}

The local search phase focuses on refining and improving a specific algorithm:

\begin{tcolorbox}[colback=red!5, colframe=red!50, title=\textbf{Local Search Prompt}, fonttitle=\bfseries]
\textcolor{purple}{\{task description\}}

I have one algorithm with its code as follows:\\
\textcolor{teal}{\{heuristic\}}\\

\textcolor{brown}{Please assist me in creating an improved version of the algorithm provided.} \\

First, describe your new algorithm and main steps in one sentence. The description must be inside within boxed \textcolor{orange}{\{\{\}\}}. 

Next, implement the following Python function:\\
\textcolor{purple}{\{task template\}}

\textcolor{gray!70}{Do not give additional explanations.}
\end{tcolorbox}

\section{More Results}

\subsection{Convergence Curve and Complementary Performance}
Figure~\ref{convergence_obp} analyzes the convergence behaviour of the heuristic set designed using EoH-S on OBP. The upper section tracks performance progression as the number of samples increases, illustrating how the ensemble leverages complementary heuristics to improve solution quality. The lower section visualizes this complementary behavior through a radar plot, capturing the dominance of individual heuristics across three key convergence phases (initial, intermediate, and final). Each radar dimension corresponds to a problem instance (total: 256) and is colored by the top-performing heuristic (out of 10) for that instance. The legend quantifies heuristic dominance by reporting the number of instances where each heuristic ranked first, highlighting their specialized roles across the instance distribution. Figure~\ref{convergence_cvrp_eoh_reevo} shows the results of EoH and ReEvo on CVRP.

These figures demonstrate that EoH-S effectively evolves a set of complementary heuristics, with the following key observations:

\begin{itemize}

\item \textbf{EoH-S achieves stable convergence.}
The complementary population management in EoH-S adheres to Theorem 3 (see main paper), ensuring convergence within 2,000 samples. In contrast, as shown in Figure~\ref{convergence_cvrp_eoh_reevo}, both EoH and ReEvo exhibit fluctuations in complementary performance during optimization. While the single best heuristic (red curve) converges in these methods, their heuristic sets lack guaranteed complementary performance. Notably, ReEvo’s final top-10 heuristics underperform compared to those found mid-search, highlighting its instability in maintaining complementary behavior.

\item \textbf{EoH-S generates a complementary heuristic set.}  
On both OBP and CVRP, EoH-S progressively improves heuristic diversity and performance. Initially, one or two heuristics dominate, but EoH-S systematically explores new heuristics to enhance both individual and collective performance. In the final set, all 10 heuristics contribute meaningfully across the 256 instances. In contrast, EoH retains two fully dominated heuristics, and ReEvo’s top one heuristic dominates nearly half the instances (119/256). Although ReEvo’s mid-search heuristics show better complementarity, its focus on optimizing only the single best heuristic (red curve) fails to ensure sustained improvement for the entire set.  

\item \textbf{Complementary behaviour varies by problem domain.} 
EoH-S produces a more balanced heuristic set for CVRP than for OBP. On OBP, one heuristic dominates most instances initially, and the top two heuristics still cover over half the instances in the final set. This disparity arises because the 256 CVRP instances with varying capacities and sizes are inherently more diverse and challenging than OBP instances. These results underscore the importance of complementary heuristic design while also revealing a limitation: EoH-S may be less effective when instance diversity is low (e.g., if a single heuristic suffices for all instances).  
\end{itemize}

\begin{figure}[htbp]
    \centering
    \includegraphics[width=1\linewidth]{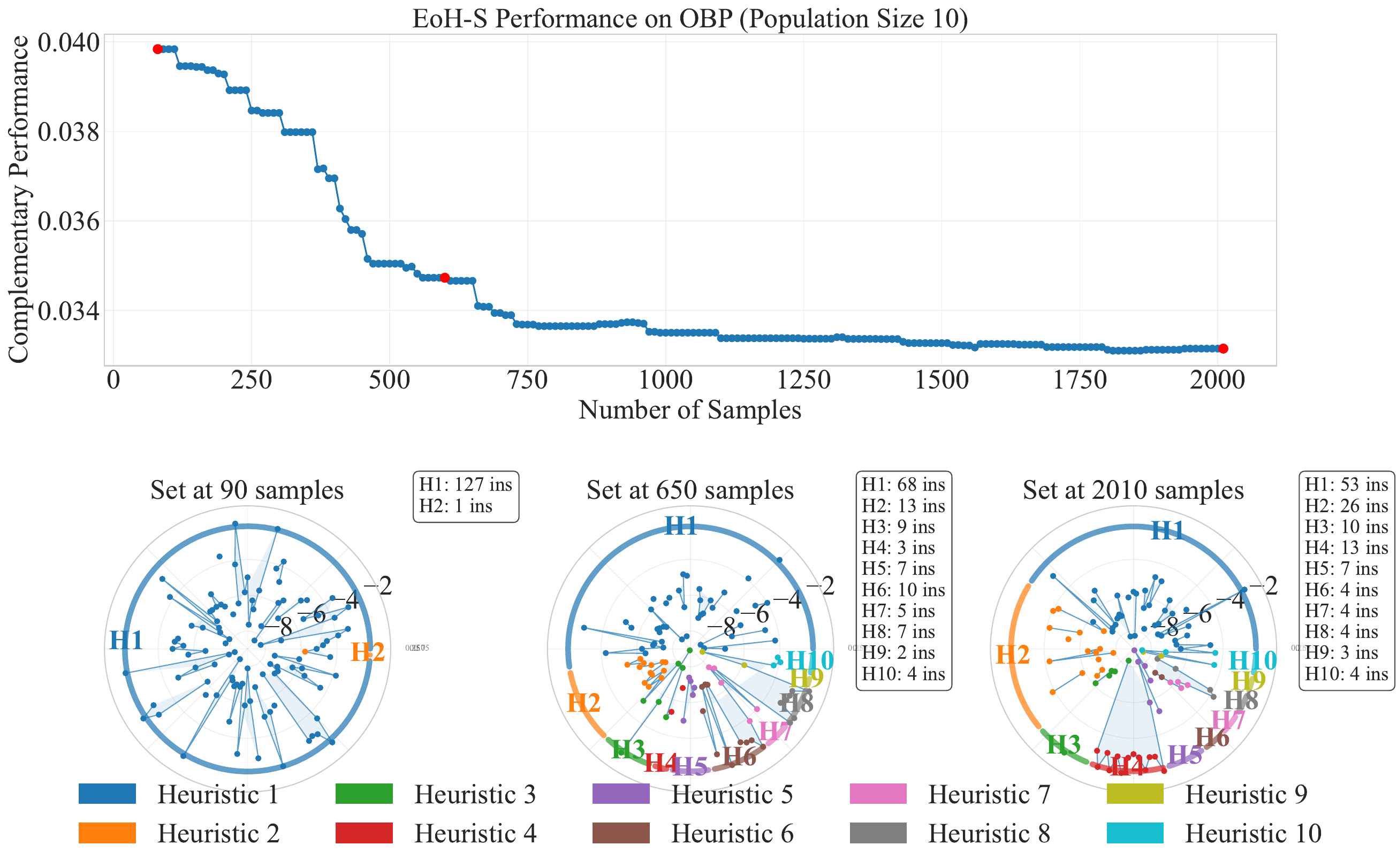}
    \caption{Convergence of EoH-S on OBP: The upper section depicts the performance w.r.t., the number of samples, while the lower section presents a radar plot characterizing the complementary behaviour at three distinct phases of the convergence process (initial, intermediate, and final). The radar plot dimensions represent the best-performing heuristic among the 10 heuristics in the heuristic set for each of the 128 problem instances, with each dimension colored according to the identity of the top-performing heuristic. The accompanying legend reports the number of instances for which each heuristic achieved the highest ranking, providing insight into their complementary behaviour on the 128 diverse instances.}
~\label{convergence_obp}
\end{figure}

\begin{figure}[htbp]
    \centering
    \includegraphics[width=0.95\linewidth]{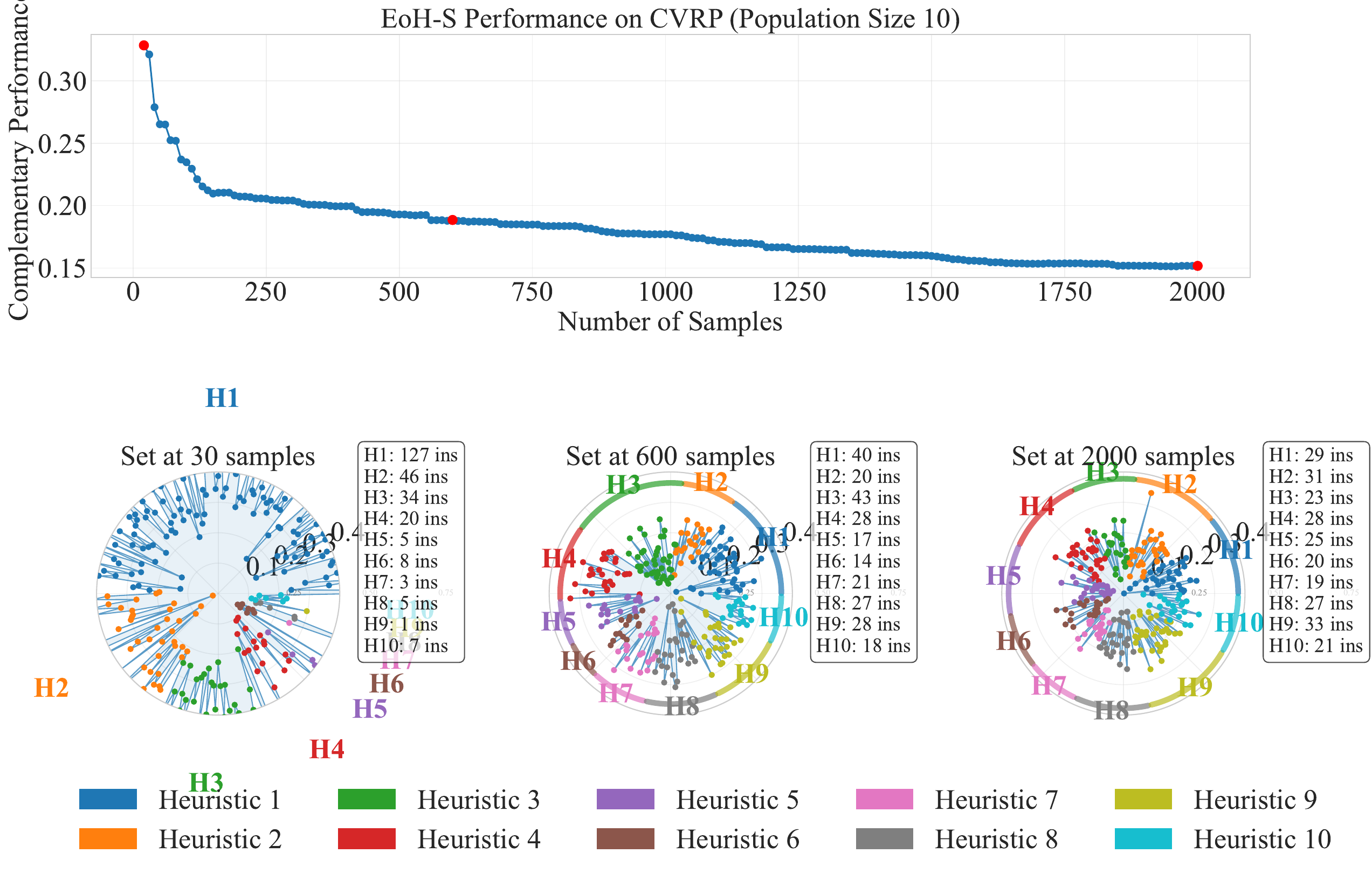}
    \caption{Convergence of EoH-S on CVRP: The upper section depicts the performance w.r.t., the number of samples, while the lower section presents a radar plot characterizing the complementary behaviour at three distinct phases of the convergence process (initial, intermediate, and final). The radar plot dimensions represent the best-performing heuristic among the 10 heuristics in the heuristic set for each of the 256 problem instances, with each dimension colored according to the identity of the top-performing heuristic. The accompanying legend reports the number of instances for which each heuristic achieved the highest ranking, providing insight into their complementary behaviour on the 256 diverse instances.}
~\label{convergence_cvrp_eohs}
\end{figure}

\begin{figure}
    \centering
    \begin{subfigure}[b]{0.85\textwidth}
        \includegraphics[width=\textwidth]{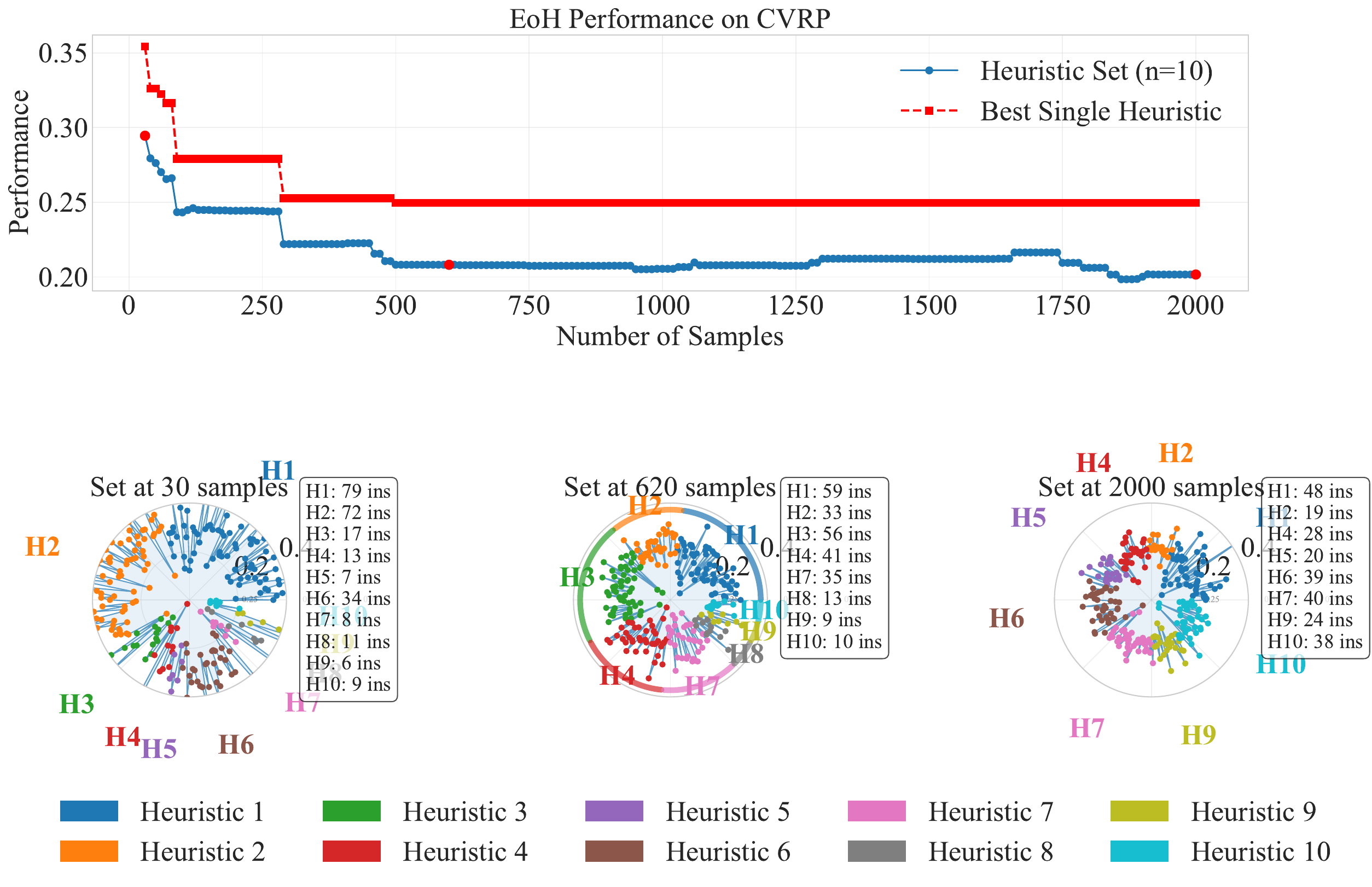}
        \label{fig:eoh_cvrp}
    \end{subfigure}
    \hfill
    \begin{subfigure}[b]{0.85\textwidth}
        \includegraphics[width=\textwidth]{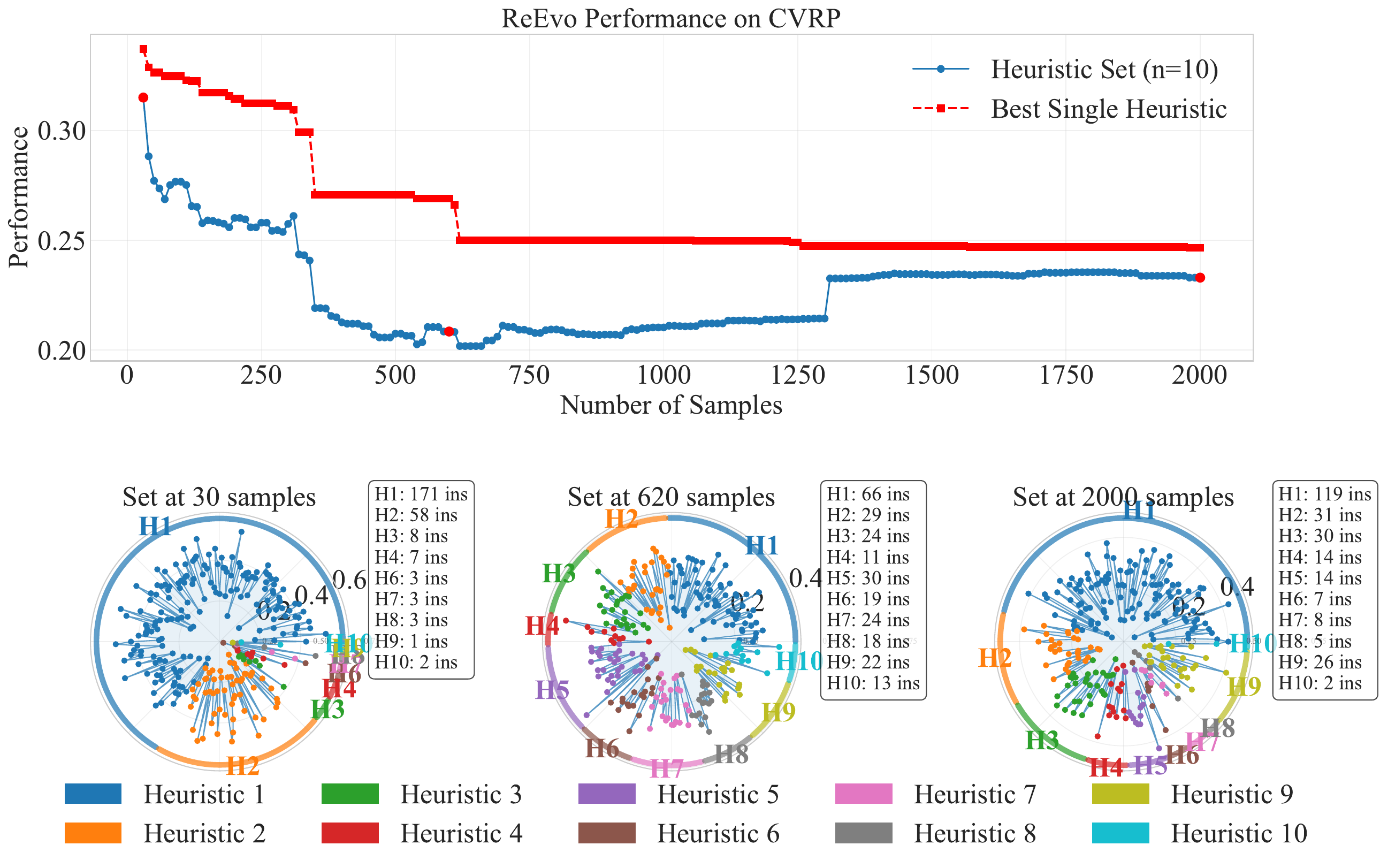}
        \label{fig:reevo_cvrp}
    \end{subfigure}
    \caption{Convergence of EoH and ReEvo on CVRP: The upper section depicts the performance w.r.t., the number of samples, while the lower section presents a radar plot characterizing the complementary behaviour at three distinct phases of the convergence process (initial, intermediate, and final). The red curve indicates the convergence of the single current best heuristic in terms of average performance, while the blue curve is the complementary performance of the top 10 heuristics. The radar plot dimensions represent the best-performing heuristic among the 10 heuristics in the heuristic set for each of the 256 problem instances, with each dimension colored according to the identity of the top-performing heuristic.}
    \label{convergence_cvrp_eoh_reevo}
\end{figure}

\subsection{}

\begin{figure}[ht]
\centering
\begin{tcolorbox}[
    enhanced,
    colback=white,
    colframe=blue!70!black,
    arc=2mm,
    boxrule=0.8pt,
    title=EoH on OBP,
    fonttitle=\bfseries\color{white},
    coltitle=blue!70!black,
    interior style={top color=blue!5!white, bottom color=blue!10!white},
    drop shadow={opacity=0.3}
]
\begin{lstlisting}[
    language=Python,
    basicstyle=\small\ttfamily,
    keywordstyle=\color{blue},
    commentstyle=\color{green!50!black},
    stringstyle=\color{red!50!black},
    numberstyle=\tiny\color{gray},
    numbers=left,
    numbersep=5pt,
    breaklines=true,
    showstringspaces=false,
    frame=none,
    tabsize=4,
    captionpos=b
]
def priority(item: float, bins: np.ndarray) 
    remaining = bins - item
    valid = remaining >= 0
    fill_ratio = np.where(bins > 0, item / bins, 0)
    remaining_norm = np.where(bins > 0, remaining / np.max(bins), 0)
    weight = 1.0 - (bins / np.max(bins)) ** 2
    priority_scores = np.where(valid, fill_ratio * 0.3 + remaining_norm * 0.7 * weight, -np.inf)
    return priority_scores
\end{lstlisting}
\end{tcolorbox}

\begin{tcolorbox}[
    enhanced,
    colback=white,
    colframe=blue!70!black,
    arc=2mm,
    boxrule=0.8pt,
    title=ReEvo on OBP,
    fonttitle=\bfseries\color{white},
    coltitle=blue!70!black,
    interior style={top color=blue!5!white, bottom color=blue!10!white},
    drop shadow={opacity=0.3}
]
\begin{lstlisting}[
    language=Python,
    basicstyle=\small\ttfamily,
    keywordstyle=\color{blue},
    commentstyle=\color{green!50!black},
    stringstyle=\color{red!50!black},
    numberstyle=\tiny\color{gray},
    numbers=left,
    numbersep=5pt,
    breaklines=true,
    showstringspaces=false,
    frame=none,
    tabsize=4,
    captionpos=b
]
def priority(item: float, bins: np.ndarray) -> np.ndarray:
    remaining_space = bins - item
    valid_mask = remaining_space >= 0
    
    if not np.any(valid_mask):
        return np.zeros_like(bins)
    
    # Moderate exact fit boost
    exact_fit = (remaining_space == 0).astype(float) * 1e4
    
    # Adaptive exponent based on global fill ratio
    residuals = remaining_space[valid_mask] / bins[valid_mask]
    global_fill = 1 - np.mean(residuals)
    exponent = 3 + 8 * global_fill  # Smoothly varies from 3 to 11
    
    # Core penalty terms
    tight_fit = 1 / (1e-6 + residuals**exponent)
    slack_penalty = 1 / (1e-6 + (1 - residuals)**exponent)
    
    # Gradual blending with optimized transition
    transition_point = 0.6 + 0.2 * global_fill  # Adaptive transition
    blend_weight = 1 / (1 + np.exp(-15*(global_fill-transition_point)))
    
    # Simple harmonic mean
    harmonic = (2 * tight_fit * slack_penalty) / (1e-6 + tight_fit + slack_penalty)
    
    # Final priorities
    priorities = np.zeros_like(bins)
    priorities[valid_mask] = (
        exact_fit[valid_mask] +
        blend_weight * tight_fit +
        (1 - blend_weight) * harmonic
    )
    
    return priorities
\end{lstlisting}
\end{tcolorbox}
\caption{Heuristics designed by EoH and ReEvo for OBP.}
\end{figure}

\begin{figure}[ht]
\centering
\begin{tcolorbox}[
    enhanced,
    colback=white,
    colframe=blue!70!black,
    arc=2mm,
    boxrule=0.8pt,
    title=EoH-S on OBP (2 Complimentary Heuristics),
    fonttitle=\bfseries\color{white},
    coltitle=blue!70!black,
    interior style={top color=blue!5!white, bottom color=blue!10!white},
    drop shadow={opacity=0.3}
]
\begin{lstlisting}[
    language=Python,
    basicstyle=\small\ttfamily,
    keywordstyle=\color{blue},
    commentstyle=\color{green!50!black},
    stringstyle=\color{red!50!black},
    numberstyle=\tiny\color{gray},
    numbers=left,
    numbersep=5pt,
    breaklines=true,
    showstringspaces=false,
    frame=none,
    tabsize=4,
    captionpos=b
]
def priority(item: float, bins: np.ndarray) -> np.ndarray:
    capacity_ratio = np.percentile(bins, 80) / np.percentile(bins, 20)
    dynamic_target = 0.55 + 0.25 * (1 / (1 + np.exp(-3.0 * (capacity_ratio - 1.1))))
    remaining = bins - item
    valid = remaining >= 0
    filled = (bins - remaining) / bins
    # Enhanced multi-tier capacity scoring
    tier_weights = 0.85 + 0.5 * np.tanh(2.5 * (bins / np.percentile(bins, 65) - 1.2))
    optimal_zone = dynamic_target * (1 + 0.18 * np.sin(1.5 * np.pi * tier_weights))
    # Dual-phase attraction-repulsion
    attraction = np.exp(-5 * np.abs(filled - optimal_zone))
    repulsion = 1 - 0.35 * np.power(np.maximum(0, filled - (optimal_zone + 0.15)), 2.0)
    reward = 3.5 * attraction * repulsion * tier_weights
    # Density-adaptive dispersion
    density_gradient = np.abs(filled - np.median(filled[valid])) if np.any(valid) else 0
    dispersion = 1.4 - 0.4 * np.tanh(6 * density_gradient) * (1 - 0.25 * np.cos(3 * np.pi * filled))
    # Resonant harmonic modulation
    harmonic_factor = 1 + 0.2 * np.sin(3 * np.pi * (filled - dynamic_target + 0.05))
    priority_scores = np.where(valid,
                             -np.log1p(remaining) * reward * dispersion * harmonic_factor,
                             np.inf)
    return priority_scores

def priority(item: float, bins: np.ndarray) -> np.ndarray:
    remaining = bins - item
    valid = remaining >= 0
    normalized_remaining = np.where(bins > 0, remaining / bins, 0)
    current_fill = 1 - normalized_remainin
    # Enhanced dynamic target adaptation
    mean_bins = np.mean(bins)
    std_bins = np.std(bins) + 1e-9
    dynamic_target = 0.65 + 0.15 * (1 / (1 + np.exp(-mean_bins / (std_bins + 1e-9)))) + 0.02 * np.tanh(mean_bins)
    # Adaptive sigmoid scaling
    fill_deviation = current_fill - dynamic_target
    adaptive_scale = 7.0 + 5.0 * np.tanh(np.var(bins) / (mean_bins**2 + 1e-9))
    reward = 2.0 / (1 + np.exp(-fill_deviation * adaptive_scale)) + 1.0
    # Bin-size-aware logarithmic capacity utilization
    size_factor = np.log1p(bins / (mean_bins + 1e-9)) ** 0.35
    capacity_term = (1 - np.exp(-3.5 * normalized_remaining)) * (0.85 + 0.15 * size_factor)
    # Triple-phase stability mechanism
    global_fill = np.mean(current_fill[valid]) if np.any(valid) else dynamic_target
    historical_fill = np.mean(current_fill) if len(bins) > 1 else dynamic_target
    local_stability = 0.90 + 0.10 * np.exp(-15 * (current_fill - dynamic_target)**2)
    global_stability = 0.92 + 0.08 * np.exp(-20 * (current_fill - global_fill)**2)
    historical_stability = 0.94 + 0.06 * np.exp(-25 * (current_fill - historical_fill)**2)
    # Dynamic fragmentation penalty
    skewness = np.mean((bins - mean_bins)**3) / (std_bins**3 + 1e-9)
    kurtosis = np.mean((bins - mean_bins)**4) / (std_bins**4 + 1e-9) - 3
    fragmentation = (np.abs(normalized_remaining - 0.5) * 0.3 + 
                    np.abs(skewness) * 0.4 + 
                    np.abs(kurtosis) * 0.3) if np.any(valid) else 0
    
    priority_scores = np.where(valid, -capacity_term * reward * local_stability * global_stability * historical_stability / (1 + fragmentation), np.inf)
    return priority_scores
\end{lstlisting}
\end{tcolorbox}
\caption{Two heuristics selected from the heuristic set designed by EoH-S for OBP.}
\end{figure}

\subsection{Designed Heuristics}

We illustrate some example heuristics for EoH, ReEvo, and EoH-S. 

\begin{figure}[ht]
\centering
\begin{tcolorbox}[
    enhanced,
    colback=white,
    colframe=blue!70!black,
    arc=2mm,
    boxrule=0.8pt,
    title=EoH on TSP,
    fonttitle=\bfseries\color{white},
    coltitle=blue!70!black,
    interior style={top color=blue!5!white, bottom color=blue!10!white},
    drop shadow={opacity=0.3}
]
\begin{lstlisting}[
    language=Python,
    basicstyle=\small\ttfamily,
    keywordstyle=\color{blue},
    commentstyle=\color{green!50!black},
    stringstyle=\color{red!50!black},
    numberstyle=\tiny\color{gray},
    numbers=left,
    numbersep=5pt,
    breaklines=true,
    showstringspaces=false,
    frame=none,
    tabsize=4,
    captionpos=b
]
def select_next_node(current_node: int, destination_node: int, unvisited_nodes: np.ndarray, distance_matrix: np.ndarray) -> int:
    current_dist = distance_matrix[current_node, unvisited_nodes]
    dest_dist = distance_matrix[destination_node, unvisited_nodes]
    
    momentum = np.sum(distance_matrix[unvisited_nodes] - distance_matrix[current_node, unvisited_nodes].reshape(-1, 1), axis=1)
    degree_attraction = np.sum(distance_matrix[unvisited_nodes] > 0, axis=1)
    cluster_penalty = -np.mean(distance_matrix[:, unvisited_nodes], axis=0)
    exploitation = np.random.rand(len(unvisited_nodes))
    
    remaining_nodes = len(unvisited_nodes)
    total_nodes = len(distance_matrix)
    progress = remaining_nodes / total_nodes
    
    momentum_weight = np.tanh(progress)
    attraction_weight = 1 / (1 + np.exp(-5 * (1 - progress)))
    penalty_weight = np.tanh(1 - progress)
    exploitation_weight = 0.1 * progress
    
    combined_score = (momentum_weight * momentum) + (attraction_weight * degree_attraction) + (penalty_weight * cluster_penalty) + (exploitation_weight * exploitation)
    return unvisited_nodes[np.argmax(combined_score)]
\end{lstlisting}
\end{tcolorbox}
\begin{tcolorbox}[
    enhanced,
    colback=white,
    colframe=blue!70!black,
    arc=2mm,
    boxrule=0.8pt,
    title=ReEvo on TSP,
    fonttitle=\bfseries\color{white},
    coltitle=blue!70!black,
    interior style={top color=blue!5!white, bottom color=blue!10!white},
    drop shadow={opacity=0.3}
]
\begin{lstlisting}[
    language=Python,
    basicstyle=\small\ttfamily,
    keywordstyle=\color{blue},
    commentstyle=\color{green!50!black},
    stringstyle=\color{red!50!black},
    numberstyle=\tiny\color{gray},
    numbers=left,
    numbersep=5pt,
    breaklines=true,
    showstringspaces=false,
    frame=none,
    tabsize=4,
    captionpos=b
]
def select_next_node(current_node: int, destination_node: int, unvisited_nodes: np.ndarray, distance_matrix: np.ndarray) -> int:
    if len(unvisited_nodes) == 1:
        return unvisited_nodes[0]
    # Current proximity (harmonic to handle zero distances)
    current_dists = distance_matrix[current_node, unvisited_nodes]
    proximity = 1 / (current_dists + 1e-8)
    # Exact future potential via MST approximation
    future_potential = np.zeros(len(unvisited_nodes))
    for i, node in enumerate(unvisited_nodes):
        remaining_nodes = np.delete(unvisited_nodes, i)
        if not remaining_nodes.size:
            future_potential[i] = 0
            continue
        # Approximate remaining tour length using nearest neighbor distances
        remaining_dists = distance_matrix[node, remaining_nodes]
        future_potential[i] = np.mean(np.sort(remaining_dists)[:3])  # Top-3 nearest
    # Normalization with stability guarantees
    def safe_normalize(x):
        x = x - x.min()
        return x / (x.max() + 1e-8)
    p_norm = safe_normalize(proximity)
    fp_norm = safe_normalize(future_potential)
    # Adaptive weights using sigmoid transition
    progress = 1 - len(unvisited_nodes) / distance_matrix.shape[0]
    exploit_weight = 0.8 / (1 + np.exp(5*(progress - 0.6)))  # Sigmoid centered at 60%
    explore_weight = 0.2 * (1 - progress)**2  # Quadratic decay
    # Combined score with directional exploration
    base_score = 0.6*p_norm + 0.4*fp_norm
    noise = np.random.normal(0, 0.05, len(unvisited_nodes)) * explore_weight
    combined_score = exploit_weight * base_score + noise
    return unvisited_nodes[np.argmax(combined_score)]
\end{lstlisting}
\end{tcolorbox}
\caption{Heuristics designed by EoH and ReEvo on TSP.}
\label{fig:heuristics_tsp}
\end{figure}

\begin{figure}[ht]
\centering
\begin{tcolorbox}[
    enhanced,
    colback=white,
    colframe=blue!70!black,
    arc=2mm,
    boxrule=0.8pt,
    title=EoH-S on TSP,
    fonttitle=\bfseries\color{white},
    coltitle=blue!70!black,
    interior style={top color=blue!5!white, bottom color=blue!10!white},
    drop shadow={opacity=0.3}
]
\begin{lstlisting}[
    language=Python,
    basicstyle=\small\ttfamily,
    keywordstyle=\color{blue},
    commentstyle=\color{green!50!black},
    stringstyle=\color{red!50!black},
    numberstyle=\tiny\color{gray},
    numbers=left,
    numbersep=5pt,
    breaklines=true,
    showstringspaces=false,
    frame=none,
    tabsize=4,
    captionpos=b
]
def select_next_node(current_node: int, destination_node: int, unvisited_nodes: np.ndarray, distance_matrix: np.ndarray) -> int:
    current_dist = distance_matrix[current_node, unvisited_nodes]
    dest_dist = distance_matrix[destination_node, unvisited_nodes]
    progress = distance_matrix[current_node, destination_node] - dest_dist
    exploration_factor = np.log(len(unvisited_nodes) + 1) * (1 + np.random.rand() * 0.5)
    centrality = np.mean(distance_matrix[unvisited_nodes][:, unvisited_nodes], axis=1)
    penalty = np.maximum(0, dest_dist - np.percentile(dest_dist, 85))
    k = min(5, len(unvisited_nodes) - 1)
    if k > 0:
        sub_matrix = distance_matrix[np.ix_(unvisited_nodes, unvisited_nodes)]
        cluster_novelty = -np.mean(np.partition(sub_matrix, k, axis=1)[:, :k], axis=1)
    else:
        cluster_novelty = np.zeros(len(unvisited_nodes))
    if len(unvisited_nodes) < len(distance_matrix) - 1:
        last_move_dir = distance_matrix[unvisited_nodes, current_node] - distance_matrix[unvisited_nodes, destination_node]
        momentum = np.abs(last_move_dir - np.mean(last_move_dir)) * (1 + 0.1 * np.random.rand())
    else:
        momentum = np.zeros(len(unvisited_nodes))
    path_diversity = np.std(distance_matrix[unvisited_nodes], axis=1) * (1 + 0.1 * np.random.rand())
    remaining_path_heuristic = np.mean(distance_matrix[unvisited_nodes], axis=1) * (1 - 0.1 * np.random.rand())
    phase = len(unvisited_nodes) / len(distance_matrix)
    scale_factor = np.mean(distance_matrix) / np.max(distance_matrix)
    entropy = -np.sum(np.exp(-current_dist) * np.log(np.exp(-current_dist) + 1e-10))
    entropy_factor = 0.1 * entropy * (1 - phase)
    w_dist = (0.25 + (0.05 * phase)) * scale_factor
    w_progress = (0.15 - (0.05 * phase)) * scale_factor
    w_explore = (0.15 - (0.05 * phase)) * (1 - scale_factor) + entropy_factor
    w_centrality = (0.1 + (0.05 * phase)) * scale_factor
    w_penalty = (0.1 - (0.05 * phase)) * scale_factor
    w_novelty = (0.1 + (0.05 * phase)) * (1 - scale_factor)
    w_momentum = (0.05 * (1 - phase)) * scale_factor
    w_diversity = (0.05 * (1 - phase)) * (1 - scale_factor)
    w_heuristic = (0.05 * phase) * (1 - scale_factor)
    score = (w_dist * current_dist) + (w_progress * progress) + (w_explore * exploration_factor) - (w_centrality * centrality) + (w_penalty * penalty) + (w_novelty * cluster_novelty) + (w_momentum * momentum) + (w_diversity * path_diversity) + (w_heuristic * remaining_path_heuristic)
    return unvisited_nodes[np.argmin(score)]
    
\end{lstlisting}
\end{tcolorbox}
\caption{Heuristic designed by EoH-S for TSP.}
\end{figure}

\begin{figure}[ht]
\centering
\begin{tcolorbox}[
    enhanced,
    colback=white,
    colframe=blue!70!black,
    arc=2mm,
    boxrule=0.8pt,
    title=EoH on CVRP,
    fonttitle=\bfseries\color{white},
    coltitle=blue!70!black,
    interior style={top color=blue!5!white, bottom color=blue!10!white},
    drop shadow={opacity=0.3}
]
\begin{lstlisting}[
    language=Python,
    basicstyle=\small\ttfamily,
    keywordstyle=\color{blue},
    commentstyle=\color{green!50!black},
    stringstyle=\color{red!50!black},
    numberstyle=\tiny\color{gray},
    numbers=left,
    numbersep=5pt,
    breaklines=true,
    showstringspaces=false,
    frame=none,
    tabsize=4,
    captionpos=b
]
def select_next_node(current_node: int, depot: int, unvisited_nodes: np.ndarray, rest_capacity: np.ndarray, demands: np.ndarray, distance_matrix: np.ndarray) -> int:
    """Design a novel algorithm to select the next node in each step.
    Args:
        current_node: ID of the current node.
        depot: ID of the depot.
        unvisited_nodes: Array of IDs of unvisited nodes.
        rest_capacity: rest capacity of vehicle
        demands: demands of nodes
        distance_matrix: Distance matrix of nodes.
    Return:
        ID of the next node to visit.
    """
    feasible_nodes = unvisited_nodes[demands[unvisited_nodes] <= rest_capacity]
    if len(feasible_nodes) == 0:
        return depot
    
    current_distances = distance_matrix[current_node, feasible_nodes]
    depot_distances = distance_matrix[feasible_nodes, depot]
    distance_savings = (distance_matrix[current_node, depot] + distance_matrix[depot, feasible_nodes]) - current_distances
    
    demand_ratio = demands[feasible_nodes] / (rest_capacity + 1e-6)
    
    avg_distances = np.mean(distance_matrix[feasible_nodes][:, feasible_nodes], axis=1)
    density_ratio = current_distances / (avg_distances + 1e-6)
    
    capacity_factor = rest_capacity / (np.max(demands[feasible_nodes]) + 1e-6)
    proximity_factor = np.mean(current_distances) / (np.mean(depot_distances) + 1e-6)
    
    w1 = max(0.3, 0.7 - capacity_factor * 0.4)
    w2 = min(0.4, 0.2 + proximity_factor * 0.2)
    w3 = 1.0 - w1 - w2
    
    weights = w1 * distance_savings + w2 * demand_ratio + w3 * (1 / (density_ratio + 1e-6))
    next_node = feasible_nodes[np.argmax(weights)]
    return next_node
\end{lstlisting}
\end{tcolorbox}
\caption{Heuristic designed by EoH for CVRP.}
\end{figure}

\begin{figure}[ht]
\centering
\begin{tcolorbox}[
    enhanced,
    colback=white,
    colframe=blue!70!black,
    arc=2mm,
    boxrule=0.8pt,
    title=ReEvo on CVRP,
    fonttitle=\bfseries\color{white},
    coltitle=blue!70!black,
    interior style={top color=blue!5!white, bottom color=blue!10!white},
    drop shadow={opacity=0.3}
]
\begin{lstlisting}[
    language=Python,
    basicstyle=\small\ttfamily,
    keywordstyle=\color{blue},
    commentstyle=\color{green!50!black},
    stringstyle=\color{red!50!black},
    numberstyle=\tiny\color{gray},
    numbers=left,
    numbersep=5pt,
    breaklines=true,
    showstringspaces=false,
    frame=none,
    tabsize=4,
    captionpos=b
]
def select_next_node(current_node: int, depot: int, unvisited_nodes: np.ndarray, rest_capacity: np.ndarray, demands: np.ndarray, distance_matrix: np.ndarray) -> int:
    if not unvisited_nodes.size:
        return depot
    # Adaptive capacity buffer with demand characteristics and route progress
    demand_stats = demands[unvisited_nodes]
    demand_cv = np.std(demand_stats) / (np.mean(demand_stats) + 1e-10)
    route_progress = 1 - len(unvisited_nodes) / len(demands)
    # Three-component adaptive buffer
    buffer = (0.02 +  0.06 * (1 - np.exp(-3 * (demand_cv - 0.3))) + 0.03 * route_progress * (1 + 0.5 * demand_cv))
    feasible_mask = demands[unvisited_nodes] <= rest_capacity * (1 + buffer)
    feasible_nodes = unvisited_nodes[feasible_mask]
    if not feasible_nodes.size:
        return depot
    # Robust distance metrics using percentile normalization
    current_dists = distance_matrix[current_node, feasible_nodes]
    depot_dists = distance_matrix[feasible_nodes, depot]
    dist_p90 = np.percentile(distance_matrix, 90)
    # Normalized components with outlier protection
    norm_current = (current_dists - np.min(current_dists)) / (np.ptp(current_dists) + 1e-10)
    norm_depot = (depot_dists - np.min(depot_dists)) / (np.ptp(depot_dists) + 1e-10)
    # Route state analysis with multiple factors
    remaining_demand = np.sum(demands[unvisited_nodes])
    capacity_ratio = min(1.0, remaining_demand / (rest_capacity + 1e-10))
    urgency = (len(unvisited_nodes) / len(demands)) ** 0.7
    # Core scoring components with enhanced formulations
    proximity = (0.8 / (current_dists + 0.1*dist_p90) + 
                0.2 * (1 - norm_current) * (1 + 0.3 * urgency))
    utilization = np.power(demands[feasible_nodes], 0.8) / (rest_capacity + 1e-10)
    spatial_cohesion = np.exp(-3 * abs(norm_current - (1 - norm_depot)))
    # Dynamic weight adaptation using route state
    proximity_weight = 0.7 - 0.2 * (1 / (1 + np.exp(-12 * (capacity_ratio - 0.4))))
    utilization_weight = 0.6 / (1 + np.exp(-10 * (1.2 - capacity_ratio)))
    spatial_weight = 0.4 * (1 - np.exp(-3 * urgency))
    # Advanced spatial clustering analysis
    if len(feasible_nodes) > 1:
        centroid = np.mean(distance_matrix[feasible_nodes], axis=0)
        spatial_scores = np.linalg.norm(distance_matrix[feasible_nodes] - centroid, axis=1)
        spatial_scores = (spatial_scores - np.min(spatial_scores)) / (np.ptp(spatial_scores) + 1e-10)
    else:
        spatial_scores = np.zeros(len(feasible_nodes))
    # Adaptive critical demand detection
    demand_ratio = demands[feasible_nodes] / rest_capacity
    critical_threshold = (0.5 +  0.3 * np.tanh(5 * (demand_cv - 0.4)) +  0.15 * route_progress)
    critical_bonus = np.where(demand_ratio > critical_threshold, (demand_ratio - critical_threshold) ** 1.5, 0)
    # Balanced composite scoring
    scores = (proximity_weight * proximity + utilization_weight * utilization + spatial_weight * spatial_cohesion + 0.7 * spatial_scores + 1.5 * critical_bonus)
    # Sophisticated tie-breaking mechanism
    best_idx = np.argmax(scores)
    if np.sum(np.isclose(scores, scores[best_idx], rtol=1e-8, atol=1e-8)) > 1:
        tied_nodes = feasible_nodes[np.isclose(scores, scores[best_idx])]
        tie_breakers = np.column_stack([-critical_bonus[np.isclose(scores, scores[best_idx])], -spatial_cohesion[np.isclose(scores, scores[best_idx])], current_dists[np.isclose(scores, scores[best_idx])], -utilization[np.isclose(scores, scores[best_idx])]])
        return tied_nodes[np.lexsort(tie_breakers.T)[0]]
    return feasible_nodes[best_idx]
\end{lstlisting}
\end{tcolorbox}
\caption{Heuristic designed by ReEvo for CVRP.}
\end{figure}

\begin{figure}[ht]
\centering
\begin{tcolorbox}[
    enhanced,
    colback=white,
    colframe=blue!70!black,
    arc=2mm,
    boxrule=0.8pt,
    title=EoH-S on CVRP,
    fonttitle=\bfseries\color{white},
    coltitle=blue!70!black,
    interior style={top color=blue!5!white, bottom color=blue!10!white},
    drop shadow={opacity=0.3}
]
\begin{lstlisting}[
    language=Python,
    basicstyle=\small\ttfamily,
    keywordstyle=\color{blue},
    commentstyle=\color{green!50!black},
    stringstyle=\color{red!50!black},
    numberstyle=\tiny\color{gray},
    numbers=left,
    numbersep=5pt,
    breaklines=true,
    showstringspaces=false,
    frame=none,
    tabsize=4,
    captionpos=b
]
def select_next_node(current_node: int, depot: int, unvisited_nodes: np.ndarray, rest_capacity: np.ndarray, demands: np.ndarray, distance_matrix: np.ndarray) -> int:
    feasible_nodes = unvisited_nodes[demands[unvisited_nodes] <= rest_capacity]
    if len(feasible_nodes) == 0:
        return depot
    
    distances = distance_matrix[current_node, feasible_nodes]
    depot_distances = distance_matrix[feasible_nodes, depot]
    normalized_demands = demands[feasible_nodes] / np.max(demands[feasible_nodes])
    capacity_ratio = rest_capacity / np.max(demands[feasible_nodes])
    urgency = np.sum(demands[unvisited_nodes]) / (rest_capacity + 1e-6)
    entropy = np.std(distance_matrix[feasible_nodes][:, feasible_nodes]) / (np.mean(distance_matrix) + 1e-6)
    
    pheromone = np.exp(-(distances**0.75 + 1.2*depot_distances**0.65) / (1.3 * np.mean(distance_matrix)))
    swarm_intensity = 0.7 * (1 + np.tanh(2.1 - capacity_ratio**0.85)) * pheromone
    fuzzy_factor = 0.3 * (1 - np.exp(-entropy/(np.mean(distances) + 1e-6))) * (1 - 0.25*swarm_intensity)
    
    proximity_weight = 0.42 * (1 - 0.22 * np.exp(-capacity_ratio**0.9)) * swarm_intensity
    demand_weight = 0.36 * (1 + 0.55 * np.tanh(urgency**0.7)) * swarm_intensity
    neighborhood_weight = 0.15 * (1 - entropy**0.6) * (1 - 0.2*swarm_intensity)
    entropy_weight = 0.05 * np.exp(-np.std(distances)/(np.mean(distances) + 1e-6)) * fuzzy_factor
    adaptive_weight = 0.02 * (1 - np.exp(-np.std(normalized_demands)/(np.mean(normalized_demands) + 1e-6))) * fuzzy_factor
    
    proximity_scores = 1.25/(distances + 1e-6)**0.6 + 1.0/(depot_distances + 1e-6)**0.5
    demand_scores = normalized_demands**1.6 * proximity_scores
    neighborhood_scores = np.array([np.sum(distance_matrix[n][feasible_nodes]) for n in feasible_nodes]) / (distances + 1e-6)**0.3
    entropy_scores = (rest_capacity - demands[feasible_nodes])**0.85 * depot_distances / (distances + 1e-6)
    adaptive_scores = (0.45 + 0.55*np.random.rand(len(feasible_nodes)))**1.9 * (demands[feasible_nodes] / (distances + 1e-6)**0.4)
    
    combined_scores = (
        proximity_weight * proximity_scores +
        demand_weight * demand_scores +
        neighborhood_weight * neighborhood_scores +
        entropy_weight * entropy_scores +
        adaptive_weight * adaptive_scores
    )
    
    return feasible_nodes[np.argmax(combined_scores)]

    
\end{lstlisting}
\end{tcolorbox}
\caption{Heuristic designed by EoH-S for CVRP.}
\end{figure}

\clearpage
\newpage
\subsection{Detailed Results on Benchmark Sets}

\subsubsection{BPPLib}

We select several representative benchmark sets from BPPLib~\cite{delorme2018bpplib}. The chosen sets and their characteristics are summarized in Table~\ref{bpplib}, comprising over 700 instances with item counts ranging from 100 to 1,002. For consistency in evaluation, we normalize the bin capacity to 100. Due to space constraints, we omit detailed per-instance results, but the complete data will be available as supplementary materials.

\begin{table}[htbp]
\centering
\caption{BPPLib benchmark sets.}~\label{bpplib}
\begin{tabular}{cccc}
\toprule
Benchmark Set                 & Number of Instances & Capacity     & Number of Items \\
                 \midrule
Schwerin\_1~\cite{scholl1997bison}      & 100                 & 1k           & 100             \\
Schwerin\_2~\cite{scholl1997bison}      & 100                 & 1k           & 120             \\
AugmentedIRUP~\cite{delorme2016bin}    & 250                 & \{2.5K–80K\} & \{201-1002\}    \\
AugmentedNonIRUP~\cite{delorme2016bin} & 250                 & \{2.5K–80K\} & \{201-1002\}    \\
Scholl\_Hard~\cite{delorme2018bpplib}     & 28                  & 10000        & \{160-200\}   
\\
\bottomrule
\end{tabular}

\end{table}

\subsubsection{TSPLib}

We select commonly used 49 symmetric Euclidean TSPLib instances~\cite{reinelt1991tsplib}, with problem sizes ranging from 52 to 1,000 nodes. For consistent evaluation, we normalize all coordinates to the range $[0,1]^2$ by scaling both dimensions uniformly using the maximum spatial extent:

\begin{equation}
\text{Scaling factor} = \max(x_{\max}-x_{\min},\ y_{\max}-y_{\min})
\end{equation}

This approach preserves the original aspect ratio while ensuring all instances are comparably scaled. 

The detailed results on all instances are listed in Table~\ref{tsplib_results}. We report the relative gap to LKH~\cite{helsgaun2017extension}. Results show that EoH-S ranks the first on almost all the instances (46 out of 49).


\subsubsection{CVRPLib}

We select 7 commonly used benchmark sets, including A, B, E, F, M, P, and X, from CVRPLib~\cite{uchoa2017new}. The chosen sets and their
characteristics are summarized in Table~\ref{cvrplib}. Due to the time limit, we do not test on all instances from the X set. For consistent evaluation, we normalize all coordinates to the range $[0,1]^2$ by scaling both dimensions uniformly using the maximum spatial extent, similar to TSPLib. 

\begin{table}[htbp]
\centering
\caption{CVRPLib benchmark sets}~\label{cvrplib}
\begin{tabular}{ccc}
\toprule
Benchmark   Set & Number of Instances & Instance Size \\
\midrule
Set A           & 27                  & 31-79         \\
Set B           & 23                  & 30-77         \\
Set E           & 11                  & 22-101        \\
Set F           & 3                   & 44-134        \\
Set M           & 5                   & 100-199       \\
Set P           & 23                  & 15-100        \\
Set X           & 43                  & 100-300      \\
\bottomrule
\end{tabular}
\end{table}

The detailed results are presented in Tables~\ref{cvrplib_results_a_b}, \ref{cvrplib_results_efmp}, and \ref{cvrplib_results_x}. EoH-S consistently outperforms existing methods across benchmark instances of varying sizes and distributions, ranking first in over half of the instances and achieving the best average performance with significant improvements.

\begin{table*}[]
\centering
\tiny
\caption{Results on TSPLib instances}~\label{tsplib_results}
\resizebox{0.74\textwidth}{!}{
\begin{tabular}{llllllll}
\toprule
\multicolumn{1}{c}{\multirow{2}{*}{Instance}} & \multicolumn{2}{c}{Funsearch} & \multicolumn{2}{c}{EoH} & \multicolumn{2}{c}{Reevo} & \multicolumn{1}{c}{\multirow{2}{*}{EoH-S}} \\
\multicolumn{1}{c}{}     & top 1     & top 10 & top 1      & top 10     & top 1   & top 10& \multicolumn{1}{c}{}  \\
\midrule
a280& 0.197     & 0.186& 0.243      & 0.239      & 0.311   & 0.141& \textbf{0.137}        \\
berlin52    & 0.177     & 0.158& 0.163      & 0.163      & 0.206   & 0.112& \textbf{0.104}        \\
bier127     & 0.132     & 0.132& 0.174      & 0.161      & 0.155   & 0.155& \textbf{0.109}        \\
ch130       & 0.163     & 0.141& 0.158      & 0.132      & 0.281   & 0.160& \textbf{0.047}        \\
ch150       & 0.161     & 0.130& 0.213      & 0.149      & 0.242   & 0.203& \textbf{0.077}        \\
d198& 0.257     & \textbf{0.138}    & 0.150      & 0.150      & 0.188   & 0.157& 0.167    \\
d493& 0.148     & 0.148& 0.192      & 0.168      & 0.186   & 0.154& \textbf{0.123}        \\
d657& 0.174     & 0.173& 0.207      & 0.189      & 0.172   & 0.169& \textbf{0.153}        \\
eil51       & 0.098     & 0.077& 0.156      & 0.156      & 0.122   & 0.075& \textbf{0.044}        \\
eil76       & 0.164     & 0.125& 0.131      & 0.125      & 0.150   & 0.136& \textbf{0.063}        \\
eil101      & 0.163     & 0.134& 0.145      & 0.126      & 0.248   & 0.134& \textbf{0.098}        \\
fl417       & 0.257     & 0.161& 0.292      & 0.227      & 0.307   & 0.296& \textbf{0.147}        \\
gil262      & 0.200     & 0.200& 0.208      & 0.206      & 0.214   & 0.183& \textbf{0.115}        \\
kroA100     & 0.202     & 0.108& 0.130      & 0.123      & 0.342   & 0.193& \textbf{0.061}        \\
kroB100     & 0.176     & 0.166& 0.218      & 0.211      & 0.271   & 0.259& \textbf{0.115}        \\
kroC100     & 0.191     & 0.161& 0.177      & 0.154      & 0.180   & 0.180& \textbf{0.045}        \\
kroD100     & 0.179     & 0.173& 0.183      & 0.172      & 0.149   & 0.108& \textbf{0.092}        \\
kroE100     & 0.133     & 0.114& 0.189      & 0.173      & 0.232   & 0.199& \textbf{0.064}        \\
kroA150     & 0.211     & 0.168& 0.145      & 0.137      & 0.243   & 0.231& \textbf{0.101}        \\
kroB150     & 0.146     & 0.117& 0.116      & 0.116      & 0.283   & 0.265& \textbf{0.093}        \\
kroA200     & 0.142     & 0.122& 0.209      & 0.177      & 0.164   & 0.133& \textbf{0.089}        \\
kroB200     & 0.193     & 0.160& 0.203      & 0.203      & 0.153   & 0.099& \textbf{0.066}        \\
lin105      & 0.247     & 0.161& 0.149      & 0.146      & 0.447   & 0.179& \textbf{0.040}        \\
lin318      & 0.149     & 0.149& 0.121      & 0.121      & 0.323   & 0.200& \textbf{0.117}        \\
p654& 0.200     & 0.197& 0.291      & 0.265      & 0.210   & 0.210& \textbf{0.105}        \\
pcb442      & 0.145     & 0.136& 0.184      & 0.182      & 0.190   & 0.158& \textbf{0.102}        \\
pr76& 0.154     & 0.135& 0.156      & 0.153      & 0.330   & 0.070& \textbf{0.044}        \\
pr107       & 0.274     & 0.259& 0.142      & 0.132      & 0.028   & \textbf{0.028}  & 0.045    \\
pr124       & 0.237     & 0.116& 0.251      & 0.209      & 0.228   & 0.205& \textbf{0.058}        \\
pr136       & 0.136     & 0.134& 0.174      & 0.151      & 0.101   & 0.101& \textbf{0.059}        \\
pr144       & 0.086     & 0.081& 0.179      & 0.179      & 0.136   & 0.136& \textbf{0.051}        \\
pr152       & 0.277     & 0.222& 0.211      & 0.196      & 0.320   & 0.164& \textbf{0.123}        \\
pr226       & 0.183     & 0.139& 0.199      & 0.197      & 0.209   & 0.209& \textbf{0.107}        \\
pr264       & 0.217     & 0.189& 0.244      & 0.214      & 0.167   & 0.148& \textbf{0.084}        \\
pr299       & 0.219     & 0.184& 0.288      & 0.288      & 0.194   & 0.194& \textbf{0.116}        \\
pr439       & 0.261     & 0.203& 0.231      & 0.231      & 0.208   & 0.164& \textbf{0.120}        \\
rat99       & 0.144     & \textbf{0.127}    & 0.149      & 0.148      & 0.246   & 0.163& 0.128    \\
rat195      & 0.112     & 0.092& 0.071      & 0.071      & 0.099   & 0.096& \textbf{0.066}        \\
rat575      & 0.171     & 0.125& 0.152      & 0.152      & 0.219   & 0.182& \textbf{0.105}        \\
rat783      & 0.144     & 0.139& 0.204      & 0.189      & 0.261   & 0.204& \textbf{0.109}        \\
rd100       & 0.174     & 0.170& 0.158      & 0.158      & 0.292   & 0.203& \textbf{0.084}        \\
rd400       & 0.167     & 0.154& 0.156      & 0.148      & 0.219   & 0.195& \textbf{0.122}        \\
st70& 0.182     & 0.181& 0.100      & 0.095      & 0.184   & 0.184& \textbf{0.026}        \\
ts225       & 0.073     & \textbf{0.033}    & 0.101      & 0.101      & 0.192   & 0.116& \textbf{0.033}        \\
tsp225      & 0.149     & 0.103& 0.235      & 0.215      & 0.180   & 0.121& \textbf{0.097}        \\
u159& 0.222     & 0.215& 0.276      & 0.273      & 0.303   & 0.112& \textbf{0.083}        \\
u574& 0.243     & 0.213& 0.207      & 0.197      & 0.225   & 0.194& \textbf{0.135}        \\
u724& 0.241     & 0.199& 0.181      & 0.175      & 0.254   & 0.174& \textbf{0.123}        \\
pr1002      & 0.206     & 0.206& 0.226      & 0.220      & 0.222   & 0.209& \textbf{0.145}        \\
\midrule
Average     & 0.181     & 0.152& 0.184      & 0.173      & 0.220   & 0.165& \textbf{0.093}\\
\bottomrule
\end{tabular}
}
\end{table*}
\begin{table*}[]
\tiny
\centering
\caption{Results on CVRPLib instances (Set A and B)}~\label{cvrplib_results_a_b}
\renewcommand{\arraystretch}{0.9}
\resizebox{0.75\textwidth}{!}{
\begin{tabular}{llllllll}
\toprule
\multicolumn{1}{c}{\multirow{2}{*}{Instance}} & \multicolumn{2}{c}{FunSearch}     & \multicolumn{2}{c}{EoH}& \multicolumn{2}{c}{ReEvo}         & \multicolumn{1}{c}{\multirow{2}{*}{EoH-S}} \\
\multicolumn{1}{c}{}     & \multicolumn{1}{c}{Top 1} & \multicolumn{1}{c}{Top 10} & \multicolumn{1}{c}{Top 1} & \multicolumn{1}{c}{Top 10} & \multicolumn{1}{c}{Top 1} & \multicolumn{1}{c}{Top 10} & \multicolumn{1}{c}{}  \\
\midrule
A-n32-k5     & 0.472& 0.329& 0.329& 0.329& 0.357& 0.357& \textbf{0.309}       \\
A-n33-k5     & 0.229& 0.229& 0.294& 0.274& 0.224& 0.224& \textbf{0.198}       \\
A-n33-k6     & 0.203& \textbf{0.203}  & 0.254& \textbf{0.203}  & 0.331& 0.224& 0.224     \\
A-n34-k5     & 0.245& 0.184& 0.222& 0.222& 0.224& \textbf{0.161}  & 0.167     \\
A-n36-k5     & 0.381& 0.352& 0.386& 0.351& 0.382& \textbf{0.278}  & 0.320     \\
A-n37-k5     & 0.399& 0.364& 0.449& 0.377& 0.392& 0.352& \textbf{0.308}       \\
A-n37-k6     & 0.284& 0.281& 0.354& 0.279& 0.366& \textbf{0.147}  & 0.148     \\
A-n38-k5     & 0.389& 0.389& 0.356& \textbf{0.252}  & 0.339& 0.312& 0.256     \\
A-n39-k5     & 0.387& 0.336& 0.268& 0.231& 0.200& \textbf{0.194}  & 0.230     \\
A-n39-k6     & 0.332& 0.284& \textbf{0.249} & \textbf{0.249}  & 0.396& 0.285& 0.295     \\
A-n44-k6     & 0.347& 0.227& 0.199& 0.199& 0.219& 0.219& \textbf{0.184}       \\
A-n45-k6     & 0.606& 0.360& 0.315& 0.308& 0.463& 0.398& \textbf{0.191}       \\
A-n45-k7     & 0.220& 0.213& 0.212& 0.204& 0.195& 0.159& \textbf{0.120}       \\
A-n46-k7     & 0.367& 0.332& 0.505& 0.385& 0.388& \textbf{0.251}  & 0.285     \\
A-n48-k7     & 0.331& \textbf{0.267}  & 0.327& 0.307& 0.338& 0.300& 0.271     \\
A-n53-k7     & 0.435& 0.317& 0.244& 0.244& 0.380& 0.322& \textbf{0.201}       \\
A-n54-k7     & 0.234& 0.227& 0.215& 0.215& 0.329& 0.187& \textbf{0.136}       \\
A-n55-k9     & 0.317& 0.287& 0.369& 0.333& 0.248& \textbf{0.240}  & 0.278     \\
A-n60-k9     & 0.317& 0.317& 0.425& 0.310& 0.323& \textbf{0.305}  & 0.332     \\
A-n61-k9     & 0.506& 0.327& 0.466& 0.244& 0.447& 0.244& \textbf{0.230}       \\
A-n62-k8     & 0.368& 0.280& 0.329& 0.276& 0.334& 0.250& \textbf{0.238}       \\
A-n63-k9     & 0.312& 0.309& 0.286& 0.240& 0.182& 0.169& \textbf{0.145}       \\
A-n63-k10    & 0.486& 0.323& 0.334& 0.257& 0.420& 0.377& \textbf{0.222}       \\
A-n64-k9     & 0.377& 0.285& 0.333& 0.327& 0.322& 0.245& \textbf{0.186}       \\
A-n65-k9     & 0.441& 0.359& 0.388& \textbf{0.324}  & 0.468& 0.345& 0.342     \\
A-n69-k9     & 0.409& 0.298& 0.370& 0.297& 0.293& 0.253& \textbf{0.218}       \\
A-n80-k10    & 0.382& 0.258& 0.311& 0.252& 0.332& 0.245& \textbf{0.216}       \\
\midrule
B-n31-k5     & 0.239& 0.238& 0.150& 0.150& 0.086& 0.081& \textbf{0.073}       \\
B-n34-k5     & 0.095& 0.091& 0.157& 0.091& 0.269& 0.089& \textbf{0.088}       \\
B-n35-k5     & 0.316& 0.203& 0.203& 0.199& 0.305& \textbf{0.144}  & 0.200     \\
B-n38-k6     & 0.434& 0.431& 0.441& 0.285& 0.332& 0.332& \textbf{0.235}       \\
B-n39-k5     & 0.919& 0.915& 0.985& 0.778& 0.946& 0.517& \textbf{0.242}       \\
B-n41-k6     & 0.159& 0.159& 0.204& \textbf{0.121}  & 0.225& 0.195& 0.131     \\
B-n43-k6     & 0.291& 0.191& 0.210& 0.194& 0.204& \textbf{0.202}  & 0.208     \\
B-n44-k7     & 0.308& 0.298& 0.220& 0.220& 0.181& \textbf{0.175}  & 0.209     \\
B-n45-k5     & 0.363& 0.339& 0.283& 0.283& 0.315& 0.315& \textbf{0.196}       \\
B-n45-k6     & 0.510& 0.490& 0.433& 0.250& 0.593& 0.273& \textbf{0.219}       \\
B-n50-k7     & 0.327& 0.327& 0.476& 0.350& 0.404& 0.355& \textbf{0.237}       \\
B-n50-k8     & 0.142& 0.132& 0.192& 0.144& 0.229& 0.136& \textbf{0.110}       \\
B-n51-k7     & 0.304& 0.205& 0.310& 0.310& 0.333& 0.118& \textbf{0.114}       \\
B-n52-k7     & 0.661& 0.490& 0.551& 0.539& 0.737& 0.488& \textbf{0.184}       \\
B-n56-k7     & 0.768& 0.603& 0.734& 0.663& 0.508& 0.429& \textbf{0.229}       \\
B-n57-k7     & 0.368& 0.368& 0.450& 0.450& 0.406& 0.256& \textbf{0.187}       \\
B-n57-k9     & 0.224& 0.187& 0.176& 0.176& 0.164& 0.164& \textbf{0.162}       \\
B-n63-k10    & 0.427& 0.356& 0.307& 0.307& 0.388& 0.269& \textbf{0.200}       \\
B-n64-k9     & 0.491& 0.380& 0.626& 0.451& 0.431& 0.351& \textbf{0.192}       \\
B-n66-k9     & 0.283& 0.202& 0.251& 0.251& 0.192& 0.183& \textbf{0.114}       \\
B-n67-k10    & 0.536& 0.466& 0.394& 0.367& 0.413& 0.246& \textbf{0.239}       \\
B-n68-k9     & 0.250& 0.219& 0.282& 0.225& 0.195& 0.195& \textbf{0.164}       \\
B-n78-k10    & 0.362& 0.286& 0.575& 0.333& 0.244& \textbf{0.232}  & 0.282     \\
\midrule
Average      & 0.371& 0.310& 0.348& 0.293& 0.340& 0.256& \textbf{0.209}      \\
\bottomrule
\end{tabular}
}
\end{table*}

\begin{table*}[]
\tiny
\centering
\caption{Results on CVRPLib instances (Set E, F, M and P)}~\label{cvrplib_results_efmp}
\renewcommand{\arraystretch}{0.9}
\resizebox{0.75\textwidth}{!}{
\begin{tabular}{llllllll}
\toprule
\multicolumn{1}{c}{\multirow{2}{*}{Instance}} & \multicolumn{2}{c}{FunSearch}     & \multicolumn{2}{c}{EoH}& \multicolumn{2}{c}{ReEvo}         & \multicolumn{1}{c}{\multirow{2}{*}{EoH-S}} \\
\multicolumn{1}{c}{}     & \multicolumn{1}{c}{Top 1} & \multicolumn{1}{c}{Top 10} & \multicolumn{1}{c}{Top 1} & \multicolumn{1}{c}{Top 10} & \multicolumn{1}{c}{Top 1} & \multicolumn{1}{c}{Top 10} & \multicolumn{1}{c}{}  \\
\midrule
E-n22-k4     & 0.543& 0.260& \textbf{0.187} & \textbf{0.187}  & 0.306& 0.306& 0.263     \\
E-n23-k3     & 0.344& 0.290& 0.223& 0.139& \textbf{0.197} & \textbf{0.197}  & 0.202     \\
E-n30-k3     & 0.154& 0.146& 0.233& 0.168& 0.161& 0.134& \textbf{0.133}       \\
E-n33-k4     & 0.200& 0.164& 0.271& 0.153& 0.219& 0.219& \textbf{0.132}       \\
E-n51-k5     & 0.343& 0.253& 0.299& 0.233& 0.251& 0.168& \textbf{0.166}       \\
E-n76-k7     & 0.467& 0.467& 0.427& 0.396& 0.505& \textbf{0.294}  & 0.326     \\
E-n76-k8     & 0.533& 0.365& 0.401& 0.334& 0.320& \textbf{0.274}  & 0.284     \\
E-n76-k10    & 0.530& 0.447& 0.337& \textbf{0.272}  & 0.305& 0.298& 0.326     \\
E-n76-k14    & 0.434& 0.369& 0.330& 0.330& 0.245& 0.223& \textbf{0.212}       \\
E-n101-k8    & 0.605& 0.439& 0.426& 0.351& 0.363& 0.358& \textbf{0.290}       \\
E-n101-k14   & 0.479& 0.431& 0.545& 0.390& 0.486& 0.360& \textbf{0.324}       \\
\midrule
F-n45-k4     & 0.664& 0.509& \textbf{0.444} & \textbf{0.444}  & 0.690& 0.576& 0.507     \\
F-n72-k4     & 0.893& 0.605& 0.754& 0.630& 0.759& 0.558& \textbf{0.452}       \\
F-n135-k7    & 0.437& 0.393& 0.419& 0.406& 0.614& 0.508& \textbf{0.321}       \\
\midrule
M-n101-k10   & 0.535& 0.472& 0.486& 0.369& 0.533& 0.392& \textbf{0.225}       \\
M-n121-k7    & 0.428& 0.386& 0.406& 0.338& 0.467& 0.348& \textbf{0.231}       \\
M-n151-k12   & 0.496& 0.460& 0.478& 0.368& 0.443& 0.428& \textbf{0.358}       \\
M-n200-k16   & 0.463& 0.363& 0.467& 0.407& 0.384& 0.346& \textbf{0.320}       \\
M-n200-k17   & 0.462& 0.362& 0.466& 0.406& 0.383& \textbf{0.345}  & 0.363     \\
\midrule
P-n16-k8     & 0.110& 0.057& 0.034& 0.034& 0.028& \textbf{0.010}  & 0.033     \\
P-n19-k2     & 0.200& 0.200& 0.153& 0.153& 0.300& \textbf{0.079}  & 0.100     \\
P-n20-k2     & 0.063& 0.063& 0.140& 0.127& 0.239& \textbf{0.020}  & 0.063     \\
P-n21-k2     & 0.151& 0.151& 0.222& 0.100& 0.269& \textbf{0.065}  & 0.122     \\
P-n22-k2     & 0.205& 0.205& 0.194& 0.121& 0.244& 0.154& \textbf{0.098}       \\
P-n22-k8     & 0.330& 0.211& 0.292& \textbf{0.210}  & 0.385& 0.302& \textbf{0.210}       \\
P-n23-k8     & 0.212& 0.212& 0.200& 0.200& 0.169& 0.121& \textbf{0.087}       \\
P-n40-k5     & 0.380& 0.333& \textbf{0.203} & \textbf{0.203}  & 0.255& 0.210& 0.228     \\
P-n45-k5     & 0.340& 0.340& 0.342& \textbf{0.212}  & 0.453& 0.307& 0.245     \\
P-n50-k7     & 0.367& 0.328& 0.299& 0.249& \textbf{0.201} & \textbf{0.201}  & 0.237     \\
P-n50-k8     & 0.304& 0.304& 0.258& 0.258& 0.340& 0.237& \textbf{0.198}       \\
P-n50-k10    & 0.305& 0.299& 0.283& 0.267& 0.214& \textbf{0.193}  & 0.199     \\
P-n51-k10    & 0.347& 0.243& 0.377& 0.314& 0.342& 0.241& \textbf{0.200}       \\
P-n55-k7     & 0.289& 0.289& 0.285& 0.203& 0.296& 0.248& \textbf{0.181}       \\
P-n55-k10    & 0.304& 0.273& 0.303& 0.261& \textbf{0.186} & \textbf{0.186}  & 0.237     \\
P-n55-k15    & 0.241& 0.157& 0.287& 0.124& 0.152& 0.124& \textbf{0.104}       \\
P-n60-k10    & 0.358& 0.303& 0.323& 0.216& 0.335& 0.281& \textbf{0.166}       \\
P-n60-k15    & 0.218& 0.218& 0.396& 0.265& 0.331& 0.253& \textbf{0.145}       \\
P-n65-k10    & 0.332& 0.332& 0.275& \textbf{0.212}  & 0.364& 0.290& 0.256     \\
P-n70-k10    & 0.352& 0.352& 0.321& 0.214& 0.322& 0.170& \textbf{0.143}       \\
P-n76-k4     & 0.415& 0.388& 0.434& 0.314& \textbf{0.200} & \textbf{0.200}  & 0.215     \\
P-n76-k5     & 0.554& 0.422& 0.267& 0.221& 0.271& \textbf{0.195}  & 0.223     \\
P-n101-k4    & 0.516& 0.285& 0.386& 0.260& 0.320& 0.236& \textbf{0.190}       \\
\midrule
Average      & 0.379& 0.313& 0.330& 0.263& 0.330& 0.254& \textbf{0.222}\\
\bottomrule
\end{tabular}
}
\end{table*}

\begin{table*}[]
\tiny
\centering
\caption{Results on CVRPLib instances (Set X)}~\label{cvrplib_results_x}
\renewcommand{\arraystretch}{0.9}
\resizebox{0.75\textwidth}{!}{
\begin{tabular}{llllllll}
\toprule
\multicolumn{1}{c}{\multirow{2}{*}{Instance}} & \multicolumn{2}{c}{FunSearch}     & \multicolumn{2}{c}{EoH}& \multicolumn{2}{c}{ReEvo}         & \multicolumn{1}{c}{\multirow{2}{*}{EoH-S}} \\
\multicolumn{1}{c}{}     & \multicolumn{1}{c}{Top 1} & \multicolumn{1}{c}{Top 10} & \multicolumn{1}{c}{Top 1} & \multicolumn{1}{c}{Top 10} & \multicolumn{1}{c}{Top 1} & \multicolumn{1}{c}{Top 10} & \multicolumn{1}{c}{}  \\
\midrule
X-n101-k25   & 0.479& 0.379& 0.488& 0.353& 0.392& 0.312& \textbf{0.242}       \\
X-n106-k14   & 0.096& 0.091& 0.080& 0.079& 0.107& 0.070& \textbf{0.066}       \\
X-n110-k13   & 0.285& 0.218& 0.221& 0.214& 0.251& \textbf{0.147}  & 0.169     \\
X-n115-k10   & 0.493& 0.439& 0.487& 0.407& 0.503& \textbf{0.372}  & 0.394     \\
X-n120-k6    & 0.192& 0.189& 0.203& 0.184& 0.202& \textbf{0.183}  & 0.184     \\
X-n125-k30   & 0.226& 0.175& 0.173& 0.173& 0.214& 0.149& \textbf{0.139}       \\
X-n129-k18   & 0.241& 0.170& 0.199& 0.199& 0.154& \textbf{0.150}  & 0.167     \\
X-n134-k13   & 0.607& 0.469& 0.582& 0.415& 0.538& 0.369& \textbf{0.248}       \\
X-n139-k10   & 0.224& 0.212& 0.260& 0.230& 0.260& 0.226& \textbf{0.200}       \\
X-n143-k7    & 0.426& 0.391& 0.545& 0.397& 0.417& 0.366& \textbf{0.341}       \\
X-n148-k46   & 0.279& 0.275& 0.325& 0.229& 0.168& \textbf{0.139}  & 0.205     \\
X-n153-k22   & 0.429& 0.412& 0.427& 0.395& 0.399& 0.388& \textbf{0.285}       \\
X-n157-k13   & 0.098& 0.091& 0.068& 0.068& 0.225& 0.212& \textbf{0.063}       \\
X-n162-k11   & 0.215& 0.205& \textbf{0.139} & \textbf{0.139}  & 0.237& 0.147& 0.173     \\
X-n167-k10   & 0.236& \textbf{0.202}  & 0.275& 0.209& 0.233& 0.213& 0.205     \\
X-n172-k51   & 0.495& 0.401& 0.465& 0.395& 0.438& 0.364& \textbf{0.345}       \\
X-n176-k26   & 0.412& 0.327& 0.363& 0.327& 0.370& 0.314& \textbf{0.288}       \\
X-n181-k23   & 0.088& 0.081& 0.074& 0.069& 0.170& 0.170& \textbf{0.066}       \\
X-n186-k15   & 0.212& \textbf{0.154}  & 0.220& 0.220& 0.211& 0.187& 0.185     \\
X-n190-k8    & 0.177& 0.169& 0.177& 0.176& 0.153& \textbf{0.148}  & 0.181     \\
X-n195-k51   & 0.443& 0.309& 0.292& 0.292& 0.367& \textbf{0.268}  & 0.318     \\
X-n200-k36   & 0.190& 0.135& 0.188& 0.149& 0.168& 0.127& \textbf{0.103}       \\
X-n204-k19   & 0.232& 0.201& 0.229& 0.199& 0.201& \textbf{0.189}  & 0.199     \\
X-n209-k16   & 0.150& 0.127& 0.198& 0.155& 0.164& 0.140& \textbf{0.126}       \\
X-n214-k11   & 0.327& 0.270& 0.293& 0.290& 0.331& 0.291& \textbf{0.239}       \\
X-n219-k73   & 0.022& 0.018& 0.018& 0.018& 0.471& 0.467& \textbf{0.015}       \\
X-n223-k34   & 0.300& 0.289& 0.265& 0.265& 0.166& \textbf{0.113}  & 0.135     \\
X-n228-k23   & 0.383& 0.366& 0.368& 0.335& 0.296& \textbf{0.244}  & 0.270     \\
X-n233-k16   & 0.461& 0.461& 0.519& 0.420& 0.630& 0.441& \textbf{0.375}       \\
X-n237-k14   & 0.185& 0.185& 0.223& 0.198& 0.194& \textbf{0.166}  & 0.190     \\
X-n242-k48   & 0.174& 0.129& 0.145& 0.145& 0.114& \textbf{0.081}  & 0.092     \\
X-n247-k50   & 0.374& 0.355& 0.378& 0.376& 0.362& 0.327& \textbf{0.311}       \\
X-n251-k28   & \textbf{0.090} & \textbf{0.090}  & 0.115& 0.098& 0.150& 0.100& 0.102     \\
X-n256-k16   & 0.219& 0.202& 0.202& 0.184& 0.184& 0.162& \textbf{0.148}       \\
X-n261-k13   & 0.405& 0.345& 0.348& 0.348& 0.336& 0.324& \textbf{0.278}       \\
X-n266-k58   & 0.090& \textbf{0.064}  & 0.089& 0.078& 0.111& 0.076& 0.081     \\
X-n270-k35   & 0.147& 0.147& 0.203& 0.203& 0.231& \textbf{0.127}  & 0.132     \\
X-n275-k28   & 0.134& 0.132& 0.140& 0.135& 0.210& 0.196& \textbf{0.113}       \\
X-n280-k17   & 0.373& 0.263& 0.257& 0.257& 0.220& \textbf{0.205}  & 0.273     \\
X-n284-k15   & 0.249& 0.227& 0.279& \textbf{0.209}  & 0.242& 0.227& 0.228     \\
X-n289-k60   & 0.228& 0.219& 0.219& 0.219& 0.261& 0.216& \textbf{0.112}       \\
X-n294-k50   & 0.438& 0.376& 0.500& 0.427& 0.364& 0.344& \textbf{0.242}       \\
X-n298-k31   & 0.298& 0.298& 0.377& 0.294& 0.204& \textbf{0.175}  & 0.211     \\
\midrule
Average      & 0.275& 0.239& 0.270& 0.237& 0.270& 0.224& \textbf{0.196}     \\
\bottomrule
\end{tabular}
}
\end{table*}

\end{document}